\documentclass[screen,nonacm,review=false,timestamp=false]{acmart}

\AtBeginDocument{%
  }

\setcopyright{acmlicensed}
\copyrightyear{2018}
\acmYear{2018}
\acmDOI{XXXXXXX.XXXXXXX}

\acmISBN{978-1-4503-XXXX-X/2018/06}

\usepackage{amsmath,amsfonts,bm}
\usepackage{soul} 

\def\eqref#1{equation~\ref{#1}}

\def\1{\bm{1}}

\DeclareMathAlphabet{\mathsfit}{\encodingdefault}{\sfdefault}{m}{sl}
\SetMathAlphabet{\mathsfit}{bold}{\encodingdefault}{\sfdefault}{bx}{n}

\newcommand{\M}{\mathcal{M}}

\newcommand{\X}{\mathsf{X}}

\newcommand{\real}{\mathbb{R}}

\usepackage{cleveref}
\usepackage{amsthm,amsmath,amsfonts}
\usepackage{bbm}
\theoremstyle{plain}

\newtheorem{thm}{Theorem}
\theoremstyle{definition}

\newtheorem{innercustomthm}{Theorem}
\newenvironment{customthm}[1]
  {\renewcommand\theinnercustomthm{#1}\innercustomthm}
  {\endinnercustomthm}

\usepackage{algorithm}
\usepackage{algpseudocode}

\usepackage{etoolbox}
\patchcmd{\subsubsection}{\itshape}{\bfseries}{}{}
\patchcmd{\paragraph}{\itshape}{\bfseries}{}{}

\begin{document}

\title{MIOFlow 2.0: A unified framework for inferring cellular stochastic dynamics from single cell and spatial transcriptomics data}

\author{Xingzhi Sun}
\authornote{Equal contribution.}
\affiliation{%
  \institution{Yale University}
  \country{USA}
}

\author{João Felipe Rocha}
\authornotemark[1]
\affiliation{%
  \institution{Yale University}
  \country{USA}
}

\author{Brett Phelan}
\authornotemark[1]
\affiliation{%
  \institution{Yale University}
  \country{USA}
}

\author{Dhananjay Bhaskar}
\authornotemark[1]
\affiliation{%
  \institution{University of Wisconsin-Madison}
  \country{USA}
}

\author{Guillaume Huguet}
\affiliation{%
  \institution{Université de Montréal; Mila - Quebec AI Institute}
  \country{Canada}
}

\author{Yanlei Zhang}
\affiliation{%
  \institution{Université de Montréal; Mila - Quebec AI Institute}
  \country{Canada}
}


\author{Alexander Tong}
\affiliation{%
  \institution{Université de Montréal; Mila - Quebec AI Institute}
  \country{Canada}
}

\author{Ke Xu}
\affiliation{%
  \institution{Yale University}
  \country{USA}
}

\author{Oluwadamilola Fasina}
\affiliation{%
  \institution{Yale University}
  \country{USA}
}

\author{Mark Gerstein}
\affiliation{%
  \institution{Yale University}
  \country{USA}
}

\author{Natalia Ivanova}
\affiliation{%
  \institution{University of Georgia}
  \country{USA}
}

\author{Christine L. Chaffer}
\affiliation{%
  \institution{Garvan Institute of Medical Research}
  \country{Australia}
}

\author{Guy Wolf}
\affiliation{%
  \institution{Université de Montréal; Mila - Quebec AI Institute}
  \country{Canada}
}

\author{Smita Krishnaswamy}
\authornote{Corresponding author: smita.krishnaswamy@yale.edu}
\affiliation{%
  \institution{Yale University}
  \country{USA}
}

\renewcommand{\shortauthors}{Sun et al.}





\begin{abstract}
Understanding cellular dynamics through time-resolved single-cell transcriptomics is essential for elucidating mechanisms of development, regeneration, and disease progression. A fundamental challenge is inferring continuous cellular trajectories from discrete snapshots, where biological complexity arises from stochasticity in cell fate decisions, temporal changes in cell proliferation and death, and environmental influences from the spatial tissue context. Current trajectory inference methods often rely on deterministic, mass-conserving interpolations that treat cells in isolation, failing to capture the probabilistic branching, population shifts, and niche-dependent signaling that drive biological processes.

We introduce Manifold Interpolating Optimal-Transport Flow (MIOFlow) 2.0, a unified framework that learns biologically informed cellular dynamics by integrating manifold learning, optimal transport and neural differential equations.

MIOFlow 2.0 represents a significant advancement over previous generative models by explicitly modeling three biological processes: (1) stochasticity and branching through Neural Stochastic Differential Equations; (2) non-conservative population dynamics via a learned growth-rate model initialized with unbalanced optimal transport; and (3) environmental influence through the construction of a joint latent space that unifies gene expression with spatial features such as neighborhood cell type composition and ligand-receptor signaling from spatial transcriptomics data.

By operating in the informative latent space of a PHATE-distance matching autoencoder, MIOFlow 2.0 ensures that trajectories respect the intrinsic geometry of the data manifold. Despite the popularity of simulation-free flow matching, expressive dynamics learning via neural differential equations outperforms existing generative models in matching complex biological trajectories. We validate MIOFlow 2.0 on synthetic datasets and biological applications, including embryoid body differentiation and spatially-resolved axolotl brain regeneration. Our results demonstrate that MIOFlow 2.0 not only improves trajectory accuracy but also reveals previously hidden drivers of cellular transitions, such as specific signaling niches regulating regenerative development. MIOFlow 2.0 bridges single-cell and spatial transcriptomics, opening new avenues for understanding tissue-scale cellular dynamics.

\end{abstract}

\begin{CCSXML}
<ccs2012>
   <concept>
       <concept_id>10010147.10010257</concept_id>
       <concept_desc>Computing methodologies~Machine learning</concept_desc>
       <concept_significance>500</concept_significance>
       </concept>
   <concept>
       <concept_id>10010405.10010444.10010087</concept_id>
       <concept_desc>Applied computing~Computational biology</concept_desc>
       <concept_significance>500</concept_significance>
       </concept>
 </ccs2012>
\end{CCSXML}

\ccsdesc[500]{Computing methodologies~Machine learning}
\ccsdesc[500]{Applied computing~Computational biology}

\keywords{Trajectory Inference, Optimal Transport, Neural Stochastic Differential Equations, Spatial Transcriptomics, Manifold Learning, Single-cell Analysis, Computational Biology.}


\maketitle

\section{Introduction}

Single-cell RNA sequencing (scRNA-seq) has revolutionized our ability to profile gene expression at unprecedented resolution, enabling the study of cellular heterogeneity and dynamics in complex biological processes~\citep{hwang2018single,saliba2014single,jovic2022single,kolodziejczyk2015technology}. When collected across multiple timepoints, these data capture snapshots of critical developmental programs, regenerative responses, and disease progressions. However, a fundamental limitation of scRNA-seq technology is its destructive nature: individual cells cannot be continuously tracked over time. Instead, researchers obtain population-level measurements at discrete intervals, where each timepoint consists of transcriptional profiles from different cells at varying stages of the underlying biological process.

This experimental constraint presents a significant computational challenge: how can we infer continuous, cell-level trajectories from these static population snapshots when no ground-truth correspondence exists between cells across timepoints? The problem is further complicated by fundamental biological realities. First, cellular transitions exhibit stochasticity driven by noisy gene expression and probabilistic fate decisions characteristic of differentiation hierarchies~\citep{elowitz2002stochastic, losick2008stochasticity, wang2011quantifying}. Second, cell populations undergo continuous size changes due to proliferation and death that dramatically reshape population distributions, particularly in contexts such as cancer treatment response~\citep{gerlinger2012intratumor, marusyk2012intratumour, gatenby2017integrating,cflows}. Third—and often overlooked by existing methods—cellular behavior is heavily influenced by spatial context: the local microenvironment of neighboring cells, signaling molecules, and tissue architecture. This spatial dependence is especially critical in processes such as embryonic development, immune responses, and tissue regeneration, where cell-cell communication and positional information guide cellular transitions~\citep{hanahan2011hallmarks, quail2013microenvironmental, ferrara2016ten, balkwill2001inflammation,sun2025hyperedge}.

The emergence of spatial transcriptomics technologies has created new opportunities to study how cellular trajectories depend on tissue context. These platforms provide spatially-resolved gene expression, capturing not only what genes cells express but also where cells are located and who their neighbors are. A cell's transcriptional state evolves not in isolation but in response to its changing neighborhood—the cell types surrounding it, the ligand-receptor interactions it participates in, and the local tissue architecture. However, existing trajectory inference methods are not designed to exploit this spatial information. They treat cellular dynamics as functions of gene expression alone, missing critical environmental drivers of transitions and failing to capture how the same cell state can follow different developmental paths depending on spatial context.

Beyond spatial considerations, technical challenges compound the difficulty of trajectory inference. Single-cell transcriptomic data are high-dimensional (thousands of genes) and exhibit substantial noise. However, strong gene-gene correlations driven by coordinated regulatory programs mean that meaningful biological variation typically resides on a low-dimensional manifold embedded within the high-dimensional gene expression space~\citep{coifman_diffusion_2006, belkin_semi-supervised_2004, moon_manifold_2018, moon_visualizing_2019, mishne2019diffusion, duque2020extendable,sun2024geometry}. Effective methods must therefore recover this manifold structure and learn continuous paths that respect the data's intrinsic geometry rather than imposing Euclidean assumptions.

Existing approaches to trajectory inference fall short in addressing these combined challenges. A substantial body of work focuses on generative modeling tasks, such as flowing from noise distributions to data, and is not designed for temporal interpolation~\citep{song_generative_2019, ho_denoising_2020, grathwohl_ffjord:_2019}. Methods that do tackle trajectory inference, including TrajectoryNet~\citep{tong_trajectorynet_2020} and Waddington-OT~\citep{schiebinger_reconstruction_2017}, often make restrictive assumptions such as Gaussian distributed populations. More recently, Flow Matching~\citep{lipman2023flow,tong2023improving} has been applied to single-cell trajectory problems but relies on prescribed velocity fields that enforce straight-line interpolations in ambient space, ignoring curved manifold geometry. While Riemannian extensions~\citep{chen2023flow} attempt to address geometric constraints, they are limited to simple, analytically-known manifolds. Critically, none of these methods can incorporate spatial information to make trajectories conditional on cellular neighborhoods, nor do they explicitly model the triumvirate of biological processes essential for realistic cellular dynamics: stochasticity, proliferation/death, and environmental conditioning.

To address these limitations, we present Manifold Interpolating Optimal Transport Flows (MIOFlow) 2.0, a comprehensive framework that unifies manifold learning, optimal transport, and neural differential equations to learn spatially-conditioned cellular dynamics from temporal snapshots. MIOFlow 2.0 learns continuous flows between cell populations by training neural differential equations guided by optimal transport costs computed over geodesic distances on a learned data manifold. By operating in the latent space of a geometry-aware autoencoder, MIOFlow 2.0 ensures that inferred trajectories inherently respect both local and global structure of the cellular state space.

A key innovation of MIOFlow 2.0 is its spatial conditioning mechanism that extends trajectory inference to leverage spatial transcriptomics data. MIOFlow 2.0 extracts rich spatial features from cellular neighborhoods, including: (1) cell type composition—the frequencies of different cell types surrounding each cell; (2) ligand-receptor signaling—the potential for specific molecular interactions based on ligand expression in neighbors and receptor expression in the target cell; (3) spatial density patterns—local crowding and tissue organization; and (4) averaged gene expression embeddings across neighborhoods. These spatial features are then used to condition the neural differential equation, allowing the learned vector field to depend on both the cell's transcriptional state and its spatial context. This enables MIOFlow 2.0 to model a fundamental biological reality: cells with identical gene expression profiles can follow different developmental trajectories depending on their microenvironment.

Beyond spatial conditioning, MIOFlow 2.0 incorporates two additional biological priors to ensure realistic modeling. First, to capture stochasticity in cell fate decisions—a hallmark of differentiation processes—MIOFlow 2.0 employs Neural Stochastic Differential Equations (Neural SDEs) that model both deterministic drift and diffusive noise. Second, to handle non-uniform population dynamics, MIOFlow 2.0 includes a learned proliferation rate model that estimates cell growth and death, initialized via unbalanced optimal transport to account for mass creation and destruction between timepoints.

Our main contributions are:
\begin{itemize}
    \item The first trajectory inference framework that integrates spatial transcriptomics data by embedding cellular neighborhoods and gene expression into a joint latent space on which the dynamics is learned.
    This enables the discovery of environmental drivers of cellular transitions including cell-cell signaling, spatial organization, and microenvironmental composition.
    \item The MIOFlow 2.0 framework for learning continuous cellular dynamics through manifold-constrained optimal transport, with theoretical guarantees connecting learned trajectories to geodesic transport on the data manifold.
    \item A unified modeling framework that explicitly incorporates three essential biological processes: stochasticity through Neural SDEs, cell proliferation and death through learned growth rates, and environmental influence through spatial feature conditioning from neighborhood composition, signaling patterns, and neighborhood gene expression. 
    \item Comprehensive validation demonstrating superior performance over state-of-the-art methods, with application to spatially-resolved axolotl brain regeneration revealing medium spiny neuron neighborhoods as key regulators of regene rative trajectories, an insight only accessible through spatial conditioning.
\end{itemize}

Our application to axolotl brain regeneration demonstrates the power of spatial conditioning:
by analyzing how trajectories vary with cellular neighborhoods, MIOFlow 2.0 reveals that the local abundance of medium spiny neurons (MSNs) influences regenerative cell fate decisions. This type of insight—linking environmental spatial features to developmental outcomes—is fundamentally inaccessible to methods that model trajectories based on gene expression alone. By bridging single-cell and spatial transcriptomics, MIOFlow 2.0 enables a new class of analysis that connects molecular states, cellular trajectories, and tissue-scale organization.

The remainder of this paper is organized as follows. Section 2 provides background on optimal transport and establishes our problem formulation. Section 3 details the MIOFlow 2.0 framework, including the manifold embedding approach, trajectory inference methodology, biological modeling components, and the spatial feature extraction and conditioning mechanism. Section 4 presents comprehensive experimental validation on synthetic and real biological datasets, with emphasis on spatially-resolved applications. Section 5 concludes with discussion and future directions.

\section{Background}
\label{sec:background}

\subsection{Biological Motivation: From Snapshots to Dynamics}

The study of cellular identity has undergone a paradigm shift from static classification to the search for continuous developmental rules.
Historically, single-cell analysis focused on "snapshots" of transcriptional states to define discrete cell types and clusters~\citep{pollen2014low, zeisel2015cell}.
However, as researchers began to sample tissues across time, it became clear that cell states are not fixed categories but points along a continuous manifold of differentiation and maturation.
This shift necessitated the development of trajectory inference methods, which aim to reconstruct the continuous paths cells follow between measured timepoints.

Early methods focused on the concept of "pseudotime," ordering cells along one-dimensional paths based on transcriptional similarity~\citep{trapnell2014pseudo,haghverdi2016diffusion,street2018slingshot}.
While these models revealed the general sequence of gene expression changes, they were fundamentally non-directional and descriptive rather than predictive. Crucially, pseudotime lacks single-cell resolution in a dynamical sense; because it treats snapshots as a static continuum, it cannot track the specific lineage path of an individual cell or predict how it would evolve across branching points under different conditions.

The introduction of RNA velocity~\citep{la2018rna, bergen2021rna} attempted to add directionality by leveraging the ratio of unspliced to spliced mRNA to predict short-term future states.
While RNA velocity provided a mechanistic leap by estimating the "velocity" of individual cells, it remains limited to short-range predictions and is highly sensitive to noise and splicing-rate assumptions. 
This makes it difficult for velocity-based methods to reconstruct long-range trajectories across distant developmental timepoints or to capture global population-level shifts.

Despite the progress made by pseudotime and velocity-based methods, a fundamental gap remains: the ability to reconstruct \textbf{global, continuous, single-cell trajectories}. 
Pseudotime provides an ordering but no physical path; RNA velocity provides a local vector but no long-term destination. To truly understand developmental "destiny," we require a framework that treats cellular change as a continuous flow in a high-dimensional state space. 
Such a framework must be "global" in that it connects distant timepoints across the entire manifold, and it must operate at "single-cell resolution" to allow us to track the hypothetical history of any individual cell. 
However, simply interpolating between points is insufficient for biological systems. 
If we treat cellular movement as a purely mathematical problem, we risk learning "shortcuts" that ignore the complex constraints of life. 
To move from a purely geometric interpolation to a biologically plausible dynamical model, we must explicitly incorporate the fundamental principles that govern cellular evolution: stochasticity, non-conservative population mass, and environmental context.

To move from these descriptive or short-range models to a predictive dynamical framework, we must account for the specific forces that govern cellular life.
First, biological differentiation is inherently governed by \textbf{stochasticity and branching}.
Transcriptional noise and probabilistic fate decisions mean that two cells starting in an identical state may diverge into distinct lineages~\citep{elowitz2002stochastic, losick2008stochasticity}.

Second, cellular populations are subject to \textbf{non-conservative dynamics}.
Unlike mechanical systems where the number of particles is constant, biological tissues are sites of continuous proliferation and apoptosis.
In rapid developmental stages or in response to cancer treatments, certain lineages may expand exponentially while others vanish~\citep{gerlinger2012intratumor, cflows}.
Standard interpolation methods that enforce a conservation of mass fail to capture these density shifts, often leading to biologically nonsensical trajectories.

Finally, and perhaps most critically, a cell's journey is shaped by its \textbf{environmental context}.
The local tissue microenvironment—composed of neighboring cell types, signaling molecules, and the gene expression of the neighboring cells—acts as a conditional input to the cell's internal program~\citep{hanahan2011hallmarks, quail2013microenvironmental}. 
With the advent of spatial transcriptomics, we can now observe that cells at the same stage of differentiation may follow entirely different trajectories if they reside in different spatial niches.

The challenge for modern computational biology is to move beyond mere interpolation and develop a mathematical framework that unifies these principles—stochasticity, mass flux, and spatial context—into a coherent model of cellular evolution.

\subsection{Optimal Transport: The Geometry of Distributional Change}
\label{sec:ot_background}
Because cellular snapshots are population-level observations, we require a framework that can interpolate between entire distributions while respecting the physical cost of biological transitions. Optimal Transport (OT) provides the rigorous mathematical foundation for this task by defining the "least effort" path between cell populations~\citep{peyre_computational_2020, villani2009optimal}. 
In the dynamic formulation of OT, we move beyond simple static correspondences to recover continuous, global trajectories at the single-cell level. 
The problem can be framed as finding the most efficient way to transform a source distribution $\mu$ into a target $\nu$ through time.

Let $\mu$ and $\nu$ be two probability measures representing cell populations at consecutive timepoints.
The Kantorovich formulation of OT seeks a transport plan $\pi$ that minimizes the total cost of moving mass from $\mu$ to $\nu$:
\begin{equation}
\label{eq:kantorovich}
W_p(\mu, \nu)^p := \inf_{\pi \in \Pi(\mu, \nu)} \int_{\mathcal{X} \times \mathcal{X}} d(x, y)^p \pi(dx, dy),
\end{equation}
where $\Pi(\mu, \nu)$ is the set of all joint distributions with marginals $\mu$ and $\nu$, and $d(x,y)$ represents the cost function, typically the Euclidean distance.

While the static formulation in Eq.~\ref{eq:kantorovich} identifies correspondences between cells, it does not describe the path taken between them.
To recover continuous dynamics, we look to the dynamic formulation of Benamou and Brenier~\citep{benamou_computational_2000}.
They showed that for $p=2$, the Wasserstein distance is equivalent to the minimum kinetic energy required to flow the density $\rho_t$ from $\mu$ to $\nu$ over the time interval $[0,1]$:
\begin{equation}
\label{eq:benamou_brenier}
W_2(\mu, \nu)^2 = \inf_{(\rho_t, v)} \int_{0}^{1} \int_{\mathbb{R}^d} \|v(x, t)\|_2^2 \rho_t(dx) dt,
\end{equation}
subject to the continuity equation $\partial_t \rho_t + \nabla \cdot (\rho_t v) = 0$.
This continuity equation is a fundamental law of physics that ensures mass is neither created nor destroyed as it flows along the vector field $v(x,t)$.

However, as discussed in the biological motivation, the assumption of mass conservation is often violated in cellular systems.
To resolve this, Unbalanced Optimal Transport (UOT) relaxes the strict marginal constraints~\citep{Chizat2015UnbalancedOT}.
By introducing divergence penalties, such as the Kullback-Leibler (KL) divergence, UOT allows for the "source" and "sink" of mass:
\begin{equation}
\label{eq:uot_math}
\text{UOT}(\mu, \nu) = \inf_{\pi \ge 0} \int d(x, y)^2 d\pi(x, y) + \lambda_1 D_{KL}(\pi_1 \| \mu) + \lambda_2 D_{KL}(\pi_2 \| \nu).
\end{equation}
This mathematical relaxation is what enables MIOFlow 2.0 to handle the proliferation and death of cell lineages without distorting the inferred transport paths.

\subsection{Neural Differential Equations: The Physics of Continuous Flows}
\label{sec:nde_background}

While Optimal Transport provides the target endpoints and the energy functional, we require a flexible parameterization to learn the underlying vector field $v(z,t)$.
Neural Ordinary Differential Equations provide a powerful framework for this by modeling the derivative of the state $Z_{u,t}$ as a neural network $f_\theta$~\citep{chen2018neural}.
In this view, the transformation of a cell state is not a single discrete jump but a continuous integration.
\begin{equation}
\label{eq:node_bg}
Z_{u,t} = Z_{u,0} + \int_0^t f_\theta(Z_{u,\tau}, \tau) d\tau.
\end{equation}
This continuous formulation allows us to evaluate the state of a cell at any arbitrary timepoint.
It provides a bridge between the sparse snapshots collected in the lab.

To account for the inherent stochasticity of differentiation, we must move beyond deterministic ODEs.
Neural Stochastic Differential Equations extend this by adding a diffusion term $\sigma(Z_{u,t}, t) dW_t$, where $W_t$ represents Brownian motion~\citep{Tzen2019NeuralSDE}.
The resulting Ito SDE allows a single initial state to evolve into a distribution of outcomes, naturally modeling branching events and transcriptional noise.
By combining the energy-minimizing principles of OT with the continuous expressive power of Neural SDEs, we can learn cellular trajectories that are both physically efficient and biologically realistic.

\subsection{Related Work: The Methodological Landscape}
\label{sec:related_work}

The intersection of Optimal Transport and Differential Equations has sparked a surge of interest in trajectory inference.
Early discrete methods, such as Waddington-OT (WOT)~\citep{schiebinger_reconstruction_2017}, used static OT to connect snapshots but lacked a continuous time model.
TrajectoryNet~\citep{tong_trajectorynet_2020} was the first to bridge this gap by regularizing a Continuous Normalizing Flow (CNF) with an OT-based kinetic energy penalty.
However, TrajectoryNet assumed strict mass conservation and was primarily tested on deterministic, non-spatial systems.

More recently, Flow Matching (FM) has emerged as a faster alternative to NDEs by regressing a vector field directly without full ODE integration during training~\citep{lipman2023flow, tong2023improving}.
Despite its efficiency, standard FM often assumes straight-line "conditional" probability paths in the ambient space.
This is a significant limitation for biological data, where trajectories must follow the high-curvature paths of a low-dimensional manifold.
Furthermore, while Riemannian Flow Matching~\citep{chen2023flow} attempts to address geometry, it requires the manifold to be analytically defined, which is not feasible for the complex, empirical graphs generated from single-cell data.

MIOFlow~\citep{huguet2022manifold} addressed this limitation by constructing a data-driven graph geometry via diffusion operators and constraining trajectories to the empirical manifold, ensuring they remain on the data manifold rather than cutting through ambient space. However, MIOFlow does not account for key biological factors such as cell proliferation or stochastic dynamics.

Importantly, none of these methods—including FM, WOT, or the original MIOFlow—explicitly account for the spatial context of cellular dynamics. They treat cells as isolated points in gene expression space, ignoring the tissue-scale signaling and neighborhood interactions that guide differentiation.

A separate line of work has begun to incorporate spatial information into trajectory inference. Methods such as SpaTrack~\cite{SHEN2025101194} and NicheFlow~\cite{sakalyan2025modeling} explicitly integrate spatial coordinates into models of cellular dynamics. However, these approaches differ from MIOFlow 2.0 in important respects: they rely on explicit spatial coordinates rather than derived spatial features, and, in the case of NicheFlow, operate at the level of cellular niches rather than individual cells (see Appendix~\ref{app:comparison} for a qualitative discussion). Moreover, neither method combines manifold-constrained transport with spatial conditioning in a unified framework.

MIOFlow 2.0 bridges these two lines of work, integrating manifold-constrained OT, stochastic dynamics, and spatial conditioning into a single unified model.

\section{Methods}
We propose MIOFlow 2.0, a unified framework for learning biologically plausible cellular trajectories. Our method operates in a latent space of a manifold-regularized autoencoder and learns a continuous time evolution $X_{u,t}$ that transports samples from an initial distribution $\mu_{t}$ to a target $\mu_{t+1}$. The framework consists of four components: (1) Manifold-Regularized Autoencoder to encode single-cell transcriptomics and spatial information (2) dynamic optimal transport, (3) The derivative network with auxiliary Population dynamics via learned growth/death rates, and (4) ODE/SDE integration.

Compared to the original MIOFlow, MIOFlow 2.0 has added the growth rate models, stochastic differential equations, and spatial information.

\subsection{The manifold-regularized autoencoder}
\label{sec:gaga}
Single-cell transcriptomic data and spatial features reside in a high-dimensional ambient space $\mathcal{X} \subset \mathbb{R}^D$ (where $D$ is the number of genes), yet biological constraints restrict valid cell states to a lower-dimensional manifold $\mathcal{M} \subset \mathcal{X}$. Standard trajectory inference methods often operate in the ambient space or use linear projections (e.g., PCA), which distort non-linear biological relationships. 
To respect the data's intrinsic geometry, we first map the raw gene expression $x \in \mathcal{X}$ to a lower-dimensional latent space $z \in \mathcal{Z} \subset \mathbb{R}^d$ using a geometry-aware autoencoder.
We employ the \textit{GAGA} framework~\citep{sun2024geometry}, which is specifically designed to preserve the diffusion geometry of the data. 
Unlike standard autoencoders that minimize only reconstruction error, our encoder $E_\phi: \mathcal{X} \to \mathcal{Z}$ is trained with a geometric loss that enforces isometry between the latent Euclidean distances and the manifold geodesic distances. 
We approximate these geodesic distances using PHATE~\citep{moon_visualizing_2019}, an information-theoretic distance metric that captures both local neighborhood structures and global non-linear trajectories via potential distances on a diffusion operator.
Formally, given a pair of cells $x_i, x_j$, we minimize the difference between their Euclidean distance in the latent space $\|E_\phi(x_i) - E_\phi(x_j)\|_2$ and their PHATE distance $D_{\text{ PHATE}}(x_i, x_j)$. 
By performing trajectory inference in this biologically informative latent space $\mathcal{Z}$, MIOFlow 2.0 ensures that the inferred transport costs correspond to true developmental distances rather than artifacts of high-dimensional noise.

\subsection{Inferring Trajectories with Dynamic OT}
Standard optimal transport finds the cheapest way to map cells from a starting distribution to a target distribution.
Dynamic optimal transport extends this by finding the continuous path that moves the mass over time while minimizing the total kinetic energy.
We model this continuous evolution of cell states in the latent space $\mathcal{Z}$.
Given a sequence of $T$ observed latent distributions $\{\mu_i\}_{i=0}^{T-1}$ corresponding to fixed timepoints $t \in \{0, \dots, T-1\}$, our goal is to learn a time-varying vector field $f_\theta(z,t)$ parametrized by a neural network.
This vector field defines the instantaneous rate of change for a cell $u$ in the latent space, represented by the differential equation:
$$ \frac{dZ_{u,t}}{dt} = f_\theta(Z_{u,t}, t). $$
To find the latent state of cell $u$ at any time $t$, we integrate this vector field forward in time using a Neural ODE:
\begin{equation}\label{eq: traj}
Z_{u,t} = Z_{u,0} + \int_0^t f_\theta(Z_{u,\tau},\tau)d\tau.
\end{equation}
This integration is subject to the condition that the transported latent cells match the observed latent data distributions, meaning $Z_{u,i} \sim \mu_i$ for all $i$.

\subsubsection{Theoretical Foundations}
Solving classic dynamic optimal transport is computationally difficult because it requires solving complex equations over entire distributions.
We adapt a theorem from \citet{tong_trajectorynet_2020} to demonstrate that we can solve this problem efficiently using a neural network.
This theorem proves that we can replace strict matching constraints with a soft penalty in our training loss.

\begin{thm}
\label{thm:dot}
Consider a time-varying vector field $f(z,t)$ defining latent cellular trajectories $dZ_{u,t} = f(Z_{u,t},t)dt$ with instantaneous density $\rho_t$, and a dissimilarity metric $D(\mu,\nu)$ such that $D(\mu,\nu)=0$ iff $\mu=\nu$.
Given these assumptions, there exists a sufficiently large regularization parameter $\lambda > 0$ such that the optimal transport problem satisfies:
\begin{equation}\label{eq: W2_in_thm}
W_2(\mu,\nu)^2 = \inf_{Z_{u,t}} \mathbb{E}\bigg[ \int_0^1 \|f(Z_{u,t},t)\|_2^2 dt \bigg] + \lambda D(\rho_1,\nu), \quad \text{s.t. } Z_{u,0} \sim \mu.    
\end{equation}

Moreover, because the process $Z_{u,t}$ is defined on the embedded manifold space $\mathcal{Z}$ learned by our geometry-aware autoencoder, the Euclidean Wasserstein distance in latent space approximates the geodesic Wasserstein distance on the ambient manifold: $W_2(\mu,\nu) \simeq W_{d_{\mathcal{M}}}(\mu,\nu)$.
\end{thm}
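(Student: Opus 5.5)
The plan is to reduce the penalized problem to the Benamou--Brenier formulation (Eq.~\ref{eq:benamou_brenier}) by a two-sided comparison, following the argument of \citet{tong_trajectorynet_2020}.

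Write $J(f)=\mathbb{E}\big[\int_0^1\|f(Z_{u,t},t)\|_2^2dt\big]$ for the kinetic energy of the flow started from $Z_{u,0}\sim\mu$. Since $Z_{u,t}$ has law $\rho_t$, we have $J(f)=\int_0^1\int\|f\|_2^2\,d\rho_t\,dt$. Moreover, $(\rho_t,f)$ solves the continuity equation $\partial_t\rho_t+\nabla\cdot(\rho_t f)=0$ with $\rho_0=\mu$. Define $V_\lambda=\inf_f J(f)+\lambda D(\rho_1,\nu)$.

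For the upper bound $V_\lambda\le W_2(\mu,\nu)^2$, we exhibit a competitor. Take the Brenier/McCann displacement interpolation between $\mu$ and $\nu$. Its velocity field $f^\star$ attains the infimum in Eq.~\ref{eq:benamou_brenier}, so $J(f^\star)=W_2^2$. It also transports $\mu$ exactly to $\nu$, so $\rho_1=\nu$ and $D(\rho_1,\nu)=0$. This bound holds for every $\lambda$. We need to assume $\mu$ is absolutely continuous with finite second moment, or else work with a minimizing sequence of smooth fields, so that the flow is well defined.

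For the lower bound $V_\lambda\ge W_2^2$ when $\lambda$ is large, consider any $f$. By Benamou--Brenier applied to the curve $\rho_t$ (running from $\mu$ to $\rho_1$), $J(f)\ge W_2(\mu,\rho_1)^2$. Hence
\[V_\lambda\ge\inf_{\rho}\; W_2(\mu,\rho)^2+\lambda D(\rho,\nu),\]
where the infimum is over reachable $\rho$. This is the main obstacle: a mere identity of indiscernibles for $D$ gives no quantitative control, so no finite $\lambda$ works without an extra assumption. My plan is to assume, and state explicitly, a comparison of the form $D(\rho,\nu)\ge c\,|W_2(\mu,\nu)^2-W_2(\mu,\rho)^2|$ on the relevant set (for instance $D=W_2$ itself, or any metric dominating $W_2$ on a compact latent domain, where $W_2^2(\mu,\cdot)$ is Lipschitz in $W_2$ with constant $2\,\mathrm{diam}$). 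With such a bound, choosing $\lambda\ge 1/c$ makes the penalized objective at least $W_2(\mu,\nu)^2$, and this closes the argument. Absent a quantitative assumption, I would fall back on the $\Gamma$-convergence statement $V_\lambda\to W_2^2$ as $\lambda\to\infty$. That follows from the upper bound, from tightness of near-minimizers (they have bounded energy), from lower semicontinuity of $J$ and $D$, and from the fact that any limit must satisfy $D(\rho_1,\nu)=0$, i.e.\ $\rho_1=\nu$.

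The second claim, $W_2\simeq W_{d_\mathcal{M}}$, is an approximation statement rather than an identity. I would derive it from the GAGA objective (Section~\ref{sec:gaga}). Suppose the encoder satisfies $\big|\|E_\phi(x)-E_\phi(y)\|_2-D_{\text{PHATE}}(x,y)\big|\le\varepsilon$ on the support, and that PHATE distances approximate $d_\mathcal{M}$. Then for any coupling $\pi$, the costs computed under the two metrics differ by $O(\varepsilon\,\mathrm{diam}+\varepsilon^2)$. Taking the infimum over $\pi$ then yields $|W_2(E_\#\mu,E_\#\nu)^2-W_{d_\mathcal{M}}(\mu,\nu)^2|=O(\varepsilon)$, which is the meaning of $\simeq$ here.
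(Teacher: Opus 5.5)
Your route differs from the paper's, and on the first claim yours is the sound one. The paper argues nothing directly there. It restates Benamou--Brenier, invokes Tong et al.\ for the KL-penalized version, and asserts that the argument carries over to any $D$ satisfying only the identity of indiscernibles. Your objection to exactly this step is correct, and even the cited base case fails. Take $\mu=N(0,1)$, $\nu=N(m,1)$ on $\mathbb{R}$ and constant fields $f\equiv sm$. The objective is $m^2\bigl(s^2+\tfrac{\lambda}{2}(1-s)^2\bigr)$ for $D=\mathrm{KL}$ and $m^2\bigl(s^2+\lambda(1-s)^2\bigr)$ for $D=W_2^2$. So the infimum is at most $m^2\lambda/(\lambda+2)$, resp.\ $m^2\lambda/(\lambda+1)$, which is strictly below $W_2(\mu,\nu)^2=m^2$ for every finite $\lambda$.

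More generally, $W_2(\mu,\rho_s)^2=s^2W_2(\mu,\nu)^2$ drops at first order along the displacement geodesic. Hence no $D$ with $D(\rho_s,\nu)=o(1-s)$ can be an exact penalty when $\mu\neq\nu$, and a first-order lower bound like yours is precisely the missing ingredient. Your two-sided argument supplies the content the paper outsources, and it closes under your hypothesis: the McCann interpolant is a competitor giving $V_\lambda\le W_2^2$ for all $\lambda$, and Benamou--Brenier along the realized curve gives $J(f)\ge W_2(\mu,\rho_1)^2$.

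The unsquared $W_2$, which is what $L_m$ actually uses, satisfies your hypothesis without any compactness. If $a=W_2(\mu,\rho)<W=W_2(\mu,\nu)$, then $W^2-a^2\le 2W(W-a)\le 2W\,W_2(\rho,\nu)$, so $\lambda\ge 2W_2(\mu,\nu)$ already suffices. Your $\lambda\to\infty$ fallback is also fine. It does rely on lower semicontinuity of $D$ under narrow convergence, which is an additional hypothesis and not a consequence of the identity of indiscernibles.

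For the manifold claim, you and the paper use the same mechanism under different hypotheses. Both compare the two costs pointwise, integrate against an arbitrary coupling, and take the infimum, implicitly identifying couplings of $E_{\#}\mu,E_{\#}\nu$ with pushforwards of couplings of $\mu,\nu$. The paper assumes a bi-Lipschitz bound $c\,d_{\mathcal{M}}\le\|E_\phi(x)-E_\phi(y)\|_2\le C\,d_{\mathcal{M}}$ and obtains $c\,W_{d_{\mathcal{M}}}\le W_2\le C\,W_{d_{\mathcal{M}}}$. This needs no bounded support, but ``$\simeq$'' then means only comparability up to constants.

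Your uniform additive error is closer to what GAGA's distance-matching loss targets. It gives $|W_2^2-W_{d_{\mathcal{M}}}^2|\le\varepsilon(2\,\mathrm{diam}+\varepsilon)$, i.e.\ actual closeness, at the cost of a diameter bound. Remember to fold the PHATE-versus-$d_{\mathcal{M}}$ error into $\varepsilon$.
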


For proof, see \Cref{app:proof}.
This theorem is the theoretical justification for using a neural network to perform dynamic optimal transport.
The first term in Equation \ref{eq: W2_in_thm} minimizes the energy of the latent trajectories generated by the network.
The second term acts as a soft penalty to ensure the final predicted latent cells match the target data distribution.
By optimizing this relaxed function, the neural network learns the optimal continuous paths between latent cell states.
Because we operate directly in the latent space $\mathcal{Z}$, these inferred trajectories naturally follow the geometry of the data manifold.

\subsubsection{Training Procedure}
In practice, we observe discrete empirical distributions $\hat{\mu}_i := (1/n_i)\sum_{k=1}^{n_i} \delta_{z_k}$ for samples $z_k \in \mathcal{Z}_i$.
We approximate the integral in \eqref{eq: traj} using a Neural ODE ~\citep{chen2018neural}, which is a neural network denoted by $\psi_\theta: \mathbb{R}^d \times \mathcal{T} \to \mathbb{R}^{d|\mathcal{T}|}$.
The derivatives output by the network are then integrated via an ODE solver to forecast cellular trajectory values over time for each cell.
Given an initial set of points $\mathcal{Z}_0$, the solver predicts their future states $\hat{\mathcal{Z}}_1, \dots, \hat{\mathcal{Z}}_{T-1} = \psi_\theta(\mathcal{Z}_0, \{1, \dots, T-1\})$.
We employ two training strategies to enforce the marginal constraints.
\begin{enumerate}
\item Local Training: We integrate sequentially between adjacent timepoints.
Given observed data $\mathcal{Z}_t$ at time $t$, we predict $\hat{\mathcal{Z}}_{t+1} = \psi_\theta(\mathcal{Z}_t, t+1)$.
This focuses on matching the immediate transition.
\item Global Training: We integrate the entire trajectory starting from the initial distribution $\mathcal{Z}_0$ to predict all subsequent timepoints.
\end{enumerate}

The overall training objective combines three loss terms described as follows.
\begin{equation}
L = \lambda_m L_m + \lambda_e L_e + \lambda_d L_d.
\end{equation}
1. Marginal Matching Loss ($L_m$): This term corresponds to the relaxation in \eqref{eq: W2_in_thm}, enforcing that transported cells match the observed population at the target timepoint.
We use the Wasserstein-2 distance between the predicted distribution $\hat{\mu}_i$ (from $\hat{\mathcal{Z}}_i$) and the observed empirical distribution $\mu_i$.
\begin{equation}
\label{eq: loss_m} L_m := \sum_{i=1}^{T-1} W_2(\hat{\mu}_i, \mu_i).
\end{equation}
We compute $W_2$ using the POT library~\citep{flamary2021pot}.
This offers a significant computational advantage over Continuous Normalizing Flows (CNF), which rely on maximum likelihood estimation and require computing the trace of the Jacobian at a cost of $O(d^2)$ per function evaluation~\citep{chen2018neural}.

2. Transport Energy Regularization ($L_e$): This term minimizes the kinetic energy of the flow, ensuring that cells follow the most direct paths (straight lines in the latent space $\mathcal{Z}$, which correspond to geodesics on the manifold).
This loss facilitates the dynamic optimal transport solution.
We approximate the integral using the solver evaluations.
\begin{equation}
\label{eq: loss_e} L_e := \sum_{i=1}^{T-1} \int_{i-1}^{i} \|f_\theta(Z_{u,t},t)\|_2^2 dt.
\end{equation}

3. Manifold Density Loss ($L_d$): Inspired by \citet{tong_trajectorynet_2020}, we add a density regularization to encourage cellular trajectories to stay within high-density regions of the manifold, preventing shortcuts through empty space.
For a predicted point $z \in \hat{\mathcal{Z}}_t$, we penalize deviations from the $k$-nearest neighbors in the observed data $\mathcal{Z}_t$.
\begin{equation}
L_d:= \lambda_d\sum_{t=1}^{T-1}\sum_{z\in\hat{\mathcal{Z}}_t}\ell_d(z,t), \text{ where } \ell_d(z,t) := \sum_{i=1}^k \max(0,\text{min-k}(\{\|z-y\|:y\in\mathcal{Z}_t\})-h). \label{eq: loss_d}
\end{equation}
where $h > 0$ is a margin parameter.

\subsection{Modeling Stochastic Dynamics with Neural SDEs}\label{sec:sde}
While Neural ODEs effectively model deterministic trends, cellular trajectories are inherently stochastic.
They are often driven by noisy gene expression and probabilistic fate decisions~\citep{elowitz2002stochastic, losick2008stochasticity}.
In such systems, a cellular state may stochastically branch into multiple distinct lineages.
Deterministic flows cannot naturally represent this phenomenon because they tend to draw straight paths across empty state space.
As illustrated in Figure \ref{fig:ablation}, a deterministic baseline struggles to capture the complex geometry of a branching dataset.
To capture this intrinsic variability and follow the diverging manifold structure, we extend our framework from deterministic ODEs to Neural Stochastic Differential Equations.

\subsubsection{Neural Stochastic Differential Equations Formulation}
The mathematical formulation of Neural SDEs treats the deterministic ODE as a special case where the diffusion term is zero~\citep{chen2018neural, Protter2005Stochastic, Tzen2019NeuralSDE}.
We represent the latent state evolution as a $d$-dimensional diffusion process $Z = \{Z_t\}_{t \in [0,T]}$ given by the solution of the Ito SDE.
\begin{equation}
\label{eq:itosde} dZ_t = b(Z_t, t; \theta)dt + \sigma(Z_t, t; \phi)dW_t,
\end{equation}
where $W_t$ is a standard $d$-dimensional Wiener process.
Biologically, the drift term $b$ models the deterministic developmental program driving the cell forward.
The diffusion term $\sigma$ models the transcriptional noise and uncertainty at developmental branching points.
In this framework, the drift $b$ and the diffusion coefficient $\sigma$ are parameterized as neural networks with weights $\theta$ and $\phi$, respectively~\citep{Tzen2019NeuralSDE}.
As established by \citet{Tzen2019Theoretical}, for a target density $q(z) = f(z)\phi_d(z)$, this formulation can obtain an information-optimal exact sample of the target distribution at $t=1$.
Furthermore, when the networks $b$ and $\sigma$ are Lipschitz-continuous in $z$ uniformly in $t$, there exists a progressively measurable mapping between the Wiener process and the latent state $Z$~\citep{Bichteler2002, Tzen2019NeuralSDE}.
This ensures the existence and uniqueness of the learned cellular trajectories.

\subsubsection{Momentum-Based Trajectory Refinement}
To account for the global dynamics of the biological process and preserve the manifold structure in sparse regions, we guide the trajectories with a momentum term.
We refine the drift term $b(Z_t, t)$ by incorporating historical velocity:
\begin{equation}
\label{eq:momentum_drift}
dZ_t = \left[ (1- \beta)f_\theta(Z_t,t) + \beta v(Z_t,t) \right] dt + \sigma(Z_t, t; \phi)dW_t,
\end{equation}
where $v(Z_t, t)$ is the momentum term representing an average of the previous drift:
\begin{equation}
\label{eq:momentum_v}
v(Z_t,t) := \int_{0}^{t} w(\xi) f_\theta(Z_{t-\xi}, t-\xi) d\xi,
\end{equation}
and $w(\cdot): \mathbb{R}^+ \to [0,1]$ is a weight function.
This momentum term acts as a stabilizer, smoothing the gradients and ensuring that the inferred trajectories maintain a consistent direction that respects the global structure of the data manifold. In other words, instead of taking sharp turns to match existing populations, momentum creates more gradual curved trajectories that account for missing populations. 


\subsection{Modeling Cell Proliferation and Death}
\label{sec:growth}

Biological populations are non-conservative, as processes such as cell proliferation and apoptosis continuously reshape the distribution of cell states over time.
Classical optimal transport enforces a strict conservation of mass, which can lead to biased trajectories when trying to match snapshots with non-uniform growth or death.
To account for these non-conservative dynamics, we introduce a proliferation neural network $h_\psi(z, t)$ that estimates the local growth or death rate for a cell in the latent state $z$ at time $t$.
For a cell at state $z$, $h_\psi(z, t)$ estimates the expected relative mass or number of descendants at a subsequent timepoint.
Values of $h_\psi(z, t) < 1$ indicate cell death or exit from the population, while values $h_\psi(z, t) > 1$ indicate proliferation.
This modifies the predicted marginal distribution $\hat{\mu}$ by weighting the transported samples:
\begin{equation}
\label{eq:weighted_mu}
\hat{\mu}(\cdot) = \frac{\sum_{j} h_\psi(z_j, t) \delta_{z_j}(\cdot)}{\sum_{j} g_\psi(z_j, t)}.
\end{equation}
The resulting weighted distribution is utilized in the marginal matching loss $L_m$:
\begin{equation}
\label{eq:Lm_growth}
L_m := \sum_{i=1}^{T-1} W_2 (\hat{\mu}_i, \mu_i),
\end{equation}
where $\mu_i$ is the ground-truth discrete distribution at time $i$ with uniform weights.
By incorporating $h_\psi$, MIOFlow 2.0 can effectively model systems with drastic population shifts, such as those found in cancer treatment response or rapid embryonic expansion.

\subsubsection{Initialization via Unbalanced Optimal Transport}
To ensure the proliferation network is biologically grounded, we initialize its weights using the marginals of static unbalanced optimal transport (UOT).
UOT relaxes the marginal matching constraints, allowing for mass creation and destruction between distributions.
For each pair of adjacent timepoints $t$ and $t+1$, we solve for the optimal non-negative coupling $\pi^*$:
\begin{equation}
\label{eq:uot_coupling}
\pi^* := \arg\min_{\pi \ge 0} \left\{ \int \|z_t - z_{t+1}\|^2\,d\pi(z_t, z_{t+1}) + \lambda_1 D_{\mathrm{KL}}(\pi_1 \| \mu_t) + \lambda_2 D_{\mathrm{KL}}(\pi_2 \| \mu_{t+1}) \right\},
\end{equation}
where $\pi_1$ and $\pi_2$ are the first and second marginals of the transport plan $\pi$.
We choose a small relaxation parameter $\lambda_1$ and a large $\lambda_2$ to allow for significant mass changes at time $t$ while encouraging strong alignment with the target distribution at time $t+1$.

The initial growth function is then estimated using the source marginal of the optimal coupling:
\begin{equation}
\label{eq:growth_init}
g_\psi(z_t, t) \approx \pi^*_1(z_t) := \int \pi^*(z_t, z_{t+1}) \, \text{d}z_{t+1}.
\end{equation}
This initialization provides a "warm-start" for the proliferation network, which is then refined during the joint training of the continuous vector field and the growth rate model.


\begin{figure}
    \centering
    \includegraphics[width=1.0\linewidth]{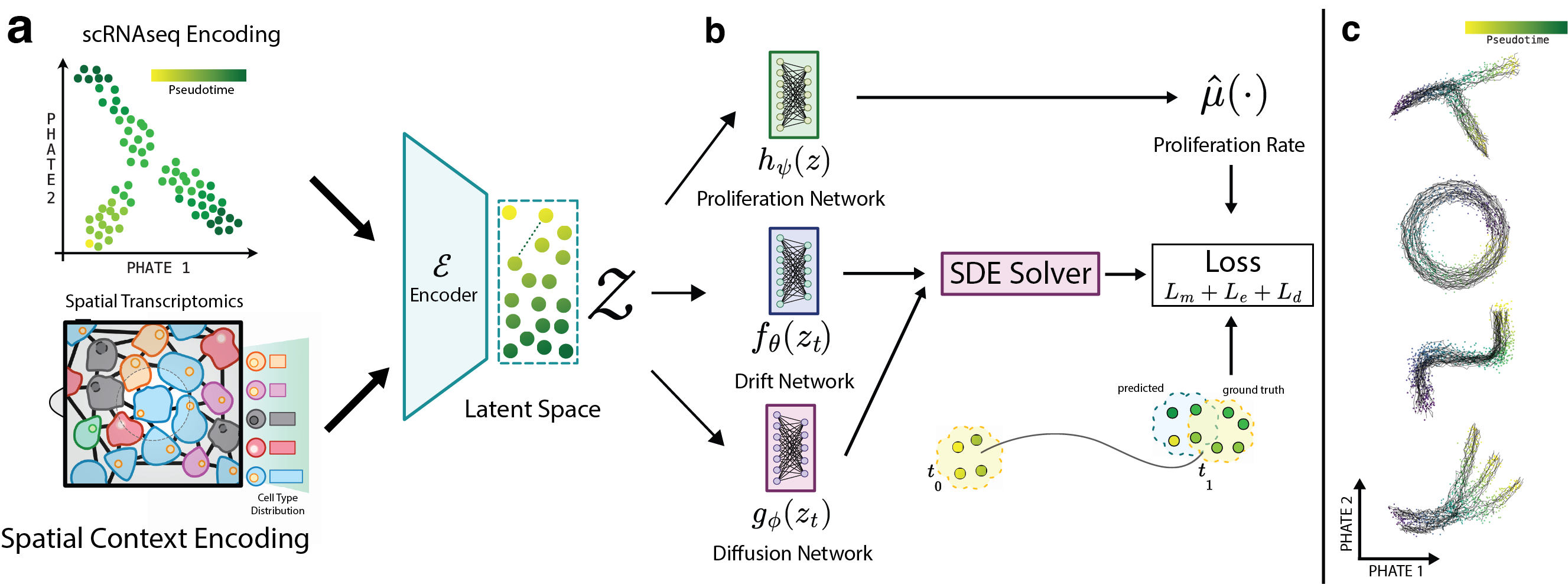}
    \caption{\textbf{Overview of the MIOFlow model.} \textbf{A.} We initialize with scRNA-seq data and spatial transcriptomics, then concatenate both feature sets into a jointly embedded latent space. Each data point in this latent space represents a cell embedding informed by its neighbors. \textbf{B.} The embedding serves as input to three networks: a proliferation network predicting the proliferation rate, and drift and diffusion networks comprising the SDE/ODE model. \textbf{C.} The resulting trajectories can be visualized over the latent space.}
    \label{fig:cond-network}
\end{figure}

\subsection{Spatially-Aware Dynamics via Joint Embeddings}
\label{sec:spatial}
A key innovation of MIOFlow 2.0 is the integration of cellular microenvironment and tissue organization into the definition of the cell state.
We posit that the vector field governing cell state evolution depends not only on the intrinsic transcriptional state of a cell but also on its local microenvironment and the signals it receives from its neighbors.
This dependency is a fundamental principle across biological systems: stem cell niches direct differentiation fate~\citep{scadden2006stem, lander2011pattern}, while in oncology, the tumor microenvironment drives cancer cell plasticity, metastasis, and therapy resistance~\citep{quail2013microenvironmental, hanahan2011hallmarks}.
Similarly, immune cell states are heavily modulated by tissue context, where local signaling networks can induce polarization or exhaustion distinct from lineage-intrinsic programs~\citep{gajewski2013cancer, binnewies2018understanding}.
To capture these dual drivers of dynamics, we construct a joint feature space that unifies the intrinsic gene expression profile with the extrinsic spatial context.


\subsubsection{Joint Feature Construction and Embedding}
We define the state of cell $i$ as a pair of vectors $(x_g^{(i)}, x_s^{(i)})$.
Here, $x_g^{(i)} \in \mathbb{R}^{D_g}$ represents the intrinsic gene expression profile.
The vector $x_s^{(i)} \in \mathbb{R}^{D_s}$ encodes the spatial context and is computed directly by aggregating features from the cell's local neighbors, $\mathcal{N}(i)$.
The spatial vector $x_s^{(i)}$ is computed from three features:
1. Neighborhood Composition: A vector containing the frequency of each cell type within $\mathcal{N}(i)$.
2. Interaction Potential: For known ligand-receptor pairs, the product of the receptor expression in cell $i$ and the sum of  ligand expression in $\mathcal{N}(i)$.
3. Local Expression Niche: The mean PCA embedding vector of $\mathcal{N}(i)$. These spatial modalities are concatenated and dimensionality-reduced by PCA to obtain a single spatial feature vector, $x_s^{(i)}$.

We then construct a low-dimensional joint gene-spatial embedding by projecting each of $x_s^{(i)}$ and $x_g^{(i)}$ through their own Geometry-aware autoencoders $E_\phi$ and $E_\theta$ respectively, yielding $z^{(i)} = \left[E_\phi(x_{s}^{(i)}),\, s \cdot E_\theta(x_g^{(i)})\right]$. Here, $s$ is a hyperparameter which controls the contribution of the spatial embedding to the overall trajectory.
The Neural SDE is trained to learn the trajectory $Z_t$ directly on this joint manifold:
\begin{equation}
dZ_t = f_\theta(Z_t, t) dt + \sigma(Z_t, t) dW_t.
\end{equation}
This approach ensures that the inferred dynamics are driven by the complete cellular state, allowing the model to distinguish between cells with identical expression but distinct environmental histories.

Below, we explain in detail the construction of the spatial features.

\subsubsection{Spatial Feature Extraction}
We extract spatial features through a structured graph-based methodology that summarizes the neighborhood of each cell represented on Figure \ref{fig:spatial_feature}.

\begin{figure}
    \centering
    \includegraphics[width=1.0\linewidth]{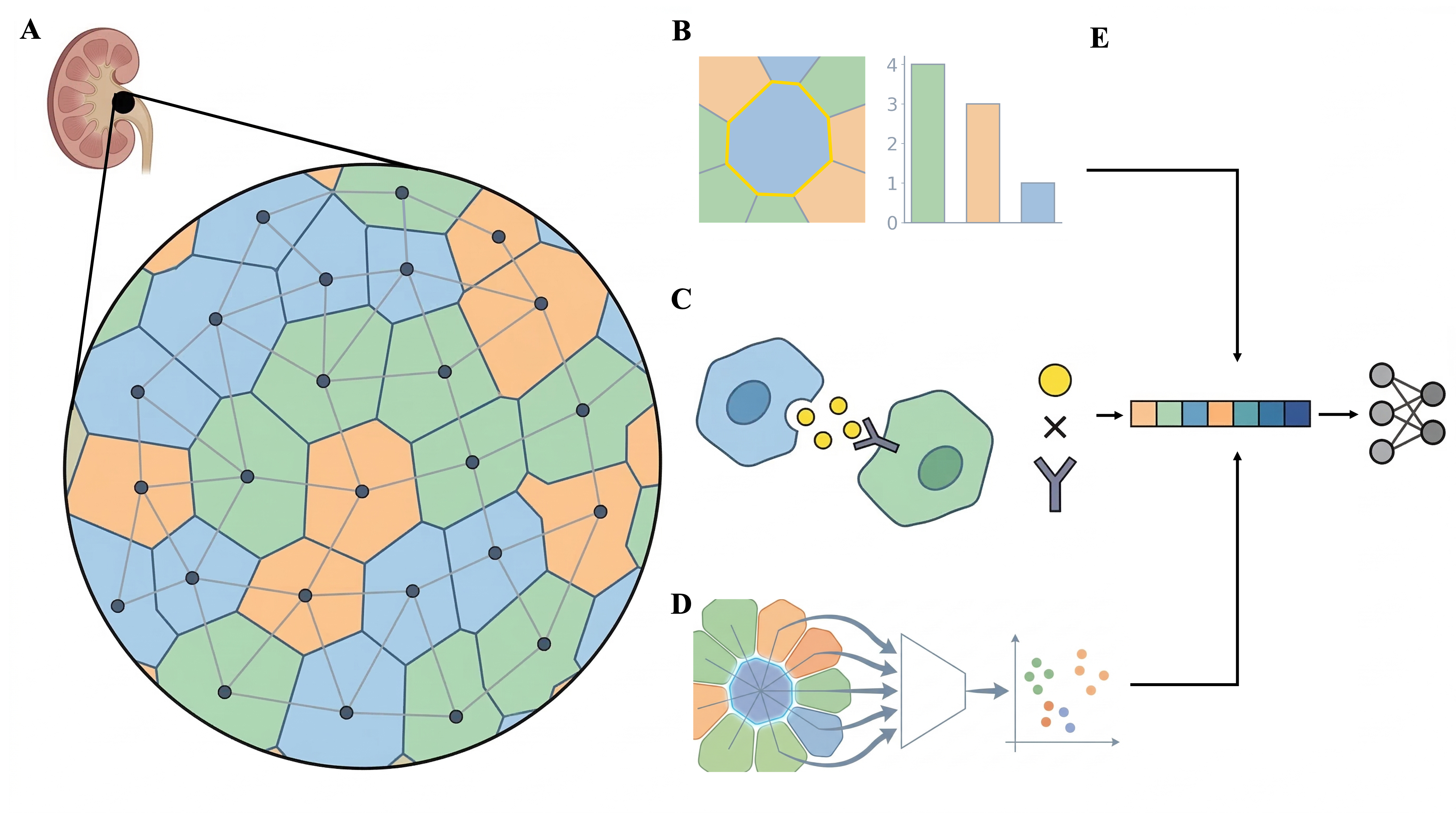}
    \caption{\textbf{ features Extraction for MIOFlow 2.0.} \textbf{A.} Build the neighborhood graph using knn graph or Voronoi polygons \textbf{B.} Compute local cell type frequency from neighborhood. Colors indicate cell types.  \textbf{C.} Compute ligand-receptors signalling strength from neighbors to target cell. \textbf{D.} Local Expression Niche: The mean PCA embedding vector of neighboring cells  \textbf{E.} Concatenate cell features with their spatial neighbors information}
    \label{fig:spatial_feature}
\end{figure}

First, we construct a cell-cell graph based on the spatial coordinates provided by spatial transcriptomics data, typically using k-nearest neighbors or Voronoi polygons, which is extended to include edges between all points within some distance $d$ on the initial graph. We then apply message passing on this graph to obtain aggregated node features that describe the neighborhoods of each cell, which we refer to as spatial features.

\subsubsection{Neighborhood Composition and Signaling}
We characterize the local environment through several distinct spatial descriptors.
For categorical features such as cell types, we convert the categories into one-hot vector encodings and sum over all neighbors to compute local cell type frequencies.
To explicitly model cell-cell communication, we calculate ligand-receptor interaction potentials for known signaling pairs~\cite{jin_cellchat}.
For each such pair, we sum the product of the receptor expression in the target cell by the ligand expression of each of its spatial neighbors. This aggregate feature quantifies the "input signal" that cell experiences from local cell signalling.

\subsubsection{Local Expression Niche}
We compute a local expression niche feature to capture the broader transcriptional state of the microenvironment.
We first project the raw gene expression data into a lower dimensional space using principal component analysis.
We then calculate the mean of these principal component embeddings across the spatial neighbors of each cell.
This step yields a concise summary of the gene expression profile surrounding the target cell.
To form the final spatial feature representation, we concatenate the neighborhood composition, signaling potentials, and the local expression niche.
We apply a secondary principal component analysis to this concatenated vector to reduce its dimensionality.

As detailed in Algorithm \ref{alg:spatial_features}, these enriched spatial features are used to condition the neural differential equation.
This allows the learned vector field to vary with the microenvironment.

\begin{algorithm}[h]
\caption{Computing Spatial Features}
\label{alg:spatial_features}
\begin{algorithmic}[1]
\State Input: 
\begin{itemize}
    \item Cell-by-gene expression matrix $X = (x_{cg})_{c\in\mathcal{C},\,g\in\mathcal{G}}$
    \item Cell-by-location matrix $Y = ((y_{c1},y_{c2}))_{c\in\mathcal{C}}$
    \item Cell type assignments $\tau = (\tau_c)_{c\in\mathcal{C}}$ with $\tau_c\in\{1,\dotsc,M\}$
    \item Ligand-Receptor pairs $\mathcal{P} = \{(l,r) : l,r\in\mathcal{G}, \, l \text{ is a ligand}, \, r \text{ is a receptor, and } l\to r \text{ is a known pair}\}$
\end{itemize}
\State Output: Spatial features $S$
\State $G \gets \texttt{MakeGraph}(Y)$ \Comment{Construct cell-cell graph from locations}
\State $E \gets \texttt{PCA}(X)$ \Comment{Compute gene expression embeddings}
\For{each cell $c\in \mathcal{C}$}
    \State $\mathcal{N}(c) \gets \texttt{Neighbors}(G,c)$
    \State $m_{c} \gets \frac{1}{|\mathcal{N}(c)|} \sum_{c' \in \mathcal{N}(c)} E_{c'}$ \Comment{Local expression niche}
    \State $h_{c} \gets \texttt{CellTypeFrequencies}(\mathcal{N}(c))$ \Comment{$h_{c}\in \mathbb{R}^{M}$: frequency vector}
    \State $a_{c} \gets 0 \in \mathbb{R}^{|\mathcal{P}|}$ \Comment{Initialize ligand-receptor vector}
    \For{each pair $(l,r) \in \mathcal{P}$ with index $k$}
        \For{$c'\in \mathcal{N}(c)$}
            \State $a_{c}[k] \gets a_{c}[k] + x_{c'l} \cdot x_{c r}$ \Comment{Accumulate interaction potential}
        \EndFor
        \State $a_{c}[k] \gets \frac{a_{c}[k]}{|\mathcal{N}(c)|}$
    \EndFor
\EndFor
\State $M_{pca} \gets (m_{c})_{c\in \mathcal{C}}$
\State $H \gets (h_{c})_{c\in \mathcal{C}}$
\State $A \gets (a_{c})_{c\in \mathcal{C}}$
\State $S_{raw} \gets \texttt{NormalizeAndConcatenate}(H,\,M_{pca},\,A)$
\State $S \gets \texttt{PCA}(S_{raw})$ \Comment{Dimensionality reduction of joint spatial features}
\State \Return $S$
\end{algorithmic}
\end{algorithm}

\subsection{MIOFlow 2.0 Framework}
\label{sec:full_framework}
We integrate the manifold-constrained optimal transport, stochastic differential dynamics, proliferation modeling, and spatial conditioning into a unified framework.
This integration allows MIOFlow 2.0 to learn trajectories that are both geometrically faithful to the data manifold and biologically plausible.
An overview of the complete learning procedure for MIOFlow 2.0 is detailed in Algorithm \ref{alg:mioflow_simplified}. A full detailed description is provided on Appendix \ref{app:full_mioflow}.

As shown in the algorithm, the framework is flexible and can operate in different modes depending on the availability of spatial data or the specific solver required for the biological system.

\subsubsection{Unified Training Objective}
The final objective function is a combination of the transport, manifold, and biological priors described in the previous sections.
During each training iteration, we sample batches of cells and their corresponding spatial conditions across timepoints.
The model parameters for the drift $b_\theta$, diffusion $\sigma_\phi$, and growth rate $g_\psi$ are updated simultaneously to minimize the total loss.
The use of the PHATE-distance matching autoencoder ensures that all differential equations are solved within a latent space where Euclidean distances approximate biological geodesic distances.

\subsubsection{Trajectory Inference and Prediction}
Once trained, MIOFlow 2.0 can be used to predict the continuous-time evolution of any cell state.
By providing an initial transcriptional state and an optional sequence of spatial environments, the model integrates the learned SDE to generate a distribution of likely future states.
This capability allows for the discovery of critical branching points and the identification of environmental drivers that steer cells toward specific fates.
The spatial conditioning mechanism, in particular, enables the simulation of "what-if" scenarios, such as predicting how a regenerative trajectory might change if the surrounding cell type composition were altered.

\begin{algorithm}[H]
\caption{MIOFlow 2.0}
\label{alg:mioflow_simplified}
\begin{algorithmic}[1]
\State {\bfseries Input:} Time-series gene expression data $\{X_1,\dots,X_T\}$, optional spatial features $\{S_1,\dots,S_T\}$
\State {\bfseries Output:} Trained trajectory model parameters $\theta,\phi,\psi$
\State
\State $\{Z_1,\dots,Z_T\} \gets \texttt{GAGA}(\{X_1,\dots,X_T,S_1,\dots,S_T\})$ \Comment{Co-Embed in low-dimensional space}

\State
\For{each training iteration}
  \State Sample mini-batches $\{\mathbf{z}_t\}_{t=1}^T$ and $\{\mathbf{s}_t\}_{t=1}^T$ (if spatial features provided)
  \If{local mode}
    \State Integrate from $t$ to $t+1$ using dynamics $f_\theta, g_\phi$ (informed by $\mathbf{s}_t$ if available)
  \Else \Comment{global mode}
    \State Integrate from the first timepoint $t=1$ to the entire trajectory (informed by $\mathbf{s}_t$if available)
  \EndIf
  \State Predict cell masses using learned growth model $h_\psi$
  \State $Loss \gets L_{m}, L_{e}, L_{d}$
  \State Update dynamics parameters $\theta,\phi$ and growth model parameters $\psi$ via gradient descent
\EndFor
\State \Return trained parameters
\end{algorithmic}
\end{algorithm}

\section{Results}

In order to validate MIOFlow 2.0 we used problems of increasing difficulty. We started using the SERGIO simulator to obtain a synthetic but biologically relevant baseline. Then we did an ablation study on sythetic data to show the effectiveness of our biologically plausible designs. Lastly we conducted a case study running our algorithm in a real-world axolotl spatial transcriptomics \cite{wei_axolotl}.

\subsection{Synthetic single-cell data generation with SERGIO}

We use SERGIO \cite{SERGIO}, a stochastic gene regulatory network (GRN) simulator, to generate synthetic single-cell gene expression data with known ground-truth regulatory structure and temporal dynamics.
SERGIO simulates gene expression by modeling the interactions between master regulators and target genes through a system of stochastic differential equations.
By incorporating both intrinsic transcriptional variability and technical noise—such as library size effects and dropout—the simulator produces sparse count matrices that emulate the challenges of experimental scRNA-seq data.
This approach allows us to benchmark MIOFlow 2.0 on populations with captured progenitor states, intermediate transitions, and terminal fates where the underlying biological "truth" is explicitly defined. We generated two synthetic datasets representing trajectory topologies commonly observed in single-cell studies: a trifurcating differentiation process and a curved, S-shaped trajectory. The full mathematical formulation of the regulatory dynamics and noise models is detailed in Appendix \ref{app:sergio}.

\subsubsection{Trifurcation trajectories}

The trifurcation dataset simulates a differentiation process in which a single progenitor cell state gives rise to three distinct terminal cell fates. We simulated 100 genes across 500 cells, organized according to a differentiation graph with one initial cell type and three downstream cell types.  This was achieved by defining a differentiation graph in which one initial cell type has outgoing transitions to three downstream cell types, each associated with a distinct master regulator expression program. Cells were sampled along all three trajectories, producing a continuous branching structure with a shared progenitor region and three diverging lineages.

\subsubsection{S-shaped trajectory}

The S-shaped dataset models a more complex trajectory involving cell-cycle dynamics coupled with fate specification. We simulated a cell-cycle driven differentiation process consisting of 1000 genes and 990 cells. In this setting, cells first undergo a cyclic progression corresponding to cell-cycle dynamics. From a specific point along the cycle, the trajectory bifurcates into two terminal fates. The final S-shaped dataset was obtained by subsampling 315 cells along the combined cycle and bifurcation trajectories, resulting in a curved differentiation path.

\subsubsection{Synthetic Results}
\label{sec:results_synthetic}
We compared MIOFlow 2.0 against several baseline trajectory inference methods on both the trifurcation and S-shaped SERGIO datasets.
The baseline methods includes Schrodinger Bridges \cite{pavon2021data,de2021diffusion} (using an ODE and SDE solvers) and Conditional Flow Matching \cite{grathwohl2018ffjord, chen2018neural}.
These methods represent diverse approaches to modeling cellular trajectories, including stochastic bridge processes and continuous normalizing flows implemented via neural ODEs.

In order to evaluate each method we follow the same procedure.
First, we obtained the latent space of the simulation using PHATE \cite{moon_visualizing_2019}.
Then in order to evaluate each method we used a hold-out strategy at each timepoint.

Let $\mathcal{D} = \{(z_i, t_i)\}_{i=1}^N$ denote the dataset where $z_i \in \mathbb{R}^d$ represents the latent coordinates and $t_i \in \{0, 1, \dots, T-1\}$ denotes the discrete timepoint.

For each timepoint $t$, we randomly partition the samples into a training set $\mathcal{D}_{train}^{(t)}$ and a test set $\mathcal{D}_{test}^{(t)}$, withholding a fraction $\alpha$ of samples for testing.
This stratified splitting ensures that test samples are available at every timepoint to assess reconstruction quality across the full temporal range.

Each trajectory inference method is trained exclusively on $\mathcal{D}_{train}$ to learn a velocity field $v_\theta(z, t)$ that captures the underlying cellular trajectories.
The learned vector fields can be integrated as either an ordinary differential equation (ODE) or, for methods that model stochasticity, a stochastic differential equation (SDE).

\paragraph{Trajectory Generation.}
Starting from training samples at $t=0$, we generate trajectories by integrating the learned vector field over the time interval $[0, T-1]$.
\begin{equation}
\frac{dz}{dt} = v_\theta(z, t), \quad z(0) \sim \mathcal{D}_{train}^{(0)}
\end{equation}

The resulting trajectories are clustered into $K$ branches using $k$-means clustering on their endpoints, and a mean trajectory $\boldsymbol{\mu}_k(t)$ is computed for each branch $k \in \{1, \ldots, K\}$. $K$ is determined depending on the number of endpoint in the simulation, for the trifurcation $k=3$ and S-shaped $k=1$ (same as a simple average).

\paragraph{Evaluation Metric.}
For each test sample $\mathbf{x} \in \mathcal{D}_{\text{test}}$, we compute the minimum Euclidean distance to the nearest point on any mean branch trajectory:
\begin{equation}
    d(\mathbf{x}) = \min_{k \in \{1,\ldots,K\}} \min_{t \in [0, T-1]} \|\mathbf{x} - \boldsymbol{\mu}_k(t)\|_2
\end{equation}

We report the mean and standard deviation of $d(\mathbf{x})$ across all test samples as the trajectory reconstruction error. This metric quantifies how well the learned trajectories cover the held-out data distribution at each timepoint.

\begin{figure}[h!]
  \centering
  \includegraphics[width=0.95\textwidth]{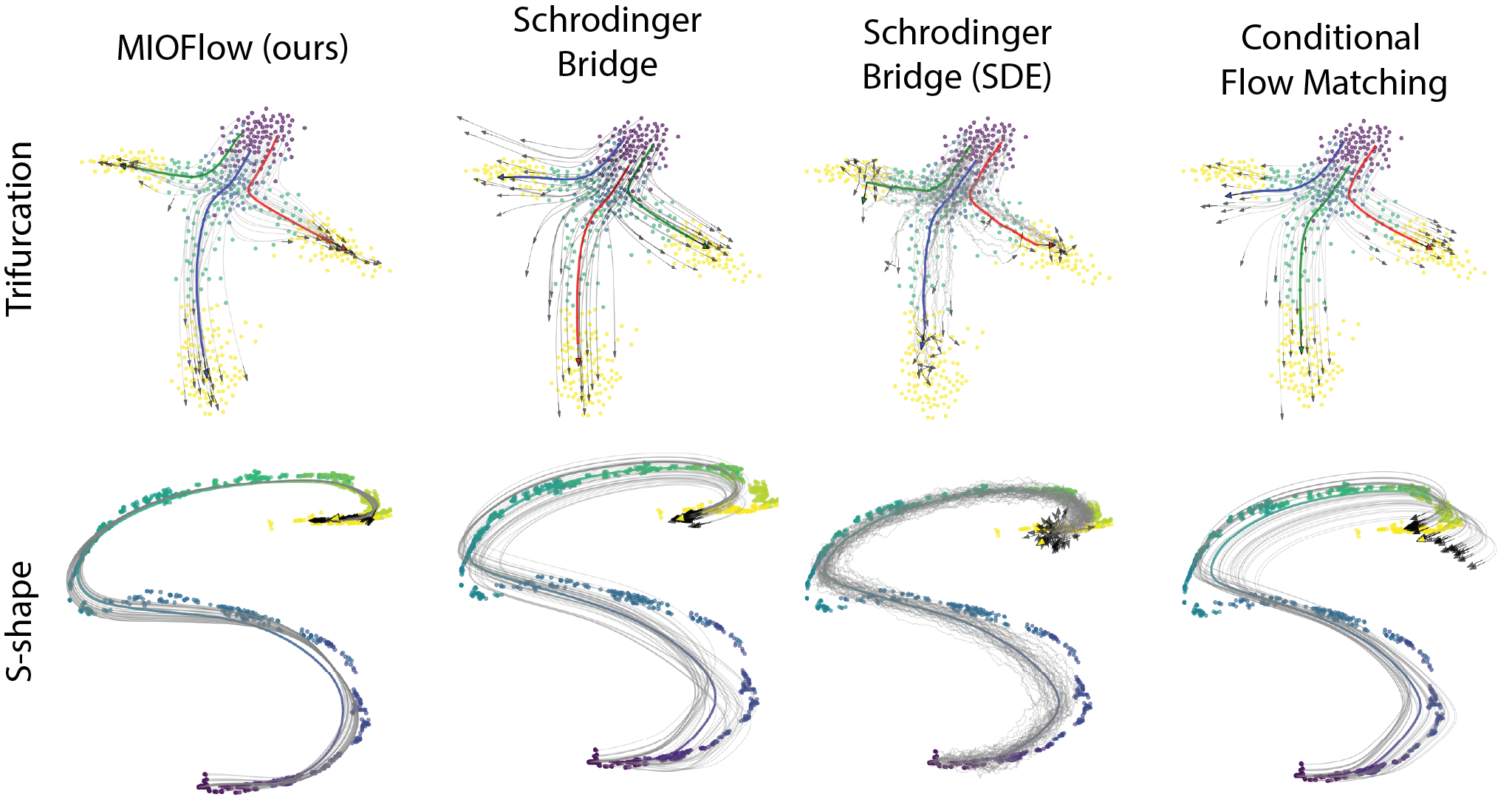}
  \caption{Comparison of trajectory inference methods on synthetic SERGIO datasets. (Top) Trifurcation dataset with three terminal fates. (Bottom) S-shaped dataset with cyclic progression and bifurcation. Cells are colored by ground truth timepoint. Predicted mean branch trajectories for each method are shown as colored curves. MIOFlow 2.0 maintains close adherence to the data manifold, while baseline methods deviate into unpopulated regions or fail to capture fine-scale curvature.}
  \label{fig:synthetic_comparison}
\end{figure}

The quantitative results for the SERGIO datasets are summarized in Table~\ref{tab:results}. MIOFlow 2.0 achieved the lowest reconstruction error on both topologies. Moreover, we can observe on Figure \ref{fig:synthetic_comparison} the results of each method on the PHATE \cite{moon_visualizing_2019} latent space. On the trifurcation datasets, the other methods adhere somewhat to the manifold, but their trajectories fail to capture curved details and tend to traverse regions where no points are sampled. The S-shaped dataset demonstrates how accounting for the latent space in a geometrical manner makes a difference when computing trajectories. While these methods can more or less follow the shape of the data, sampling from their trajectories would generate substantial measurement errors, since they largely fall outside the manifold. MIOFlow 2.0, by contrast, adheres faithfully to the data manifold. 

\begin{table}[h]
\centering
\caption{Trajectory reconstruction error (mean $\pm$ std) from held-out test samples to the nearest point on predicted trajectories.}
\label{tab:results}
\setlength{\tabcolsep}{8pt}
\begin{tabular}{lcc}
\toprule
 & \multicolumn{2}{c}{\textbf{Dataset}} \\
\cmidrule(lr){2-3}
\textbf{Method}      & S-shaped & Trifurcation \\
\midrule
MIOFlow              & $0.082 \pm 0.070$ & $0.243 \pm 0.202$ \\
SF2M (ODE)           & $0.134 \pm 0.122$ & $0.278 \pm 0.240 $ \\
SF2M (SDE)           & $0.115 \pm 0.091$ & $0.320 \pm 0.285$ \\
OT-CFM               & $0.118 \pm 0.095$ & $0.298 \pm 0.250$ \\
\bottomrule
\end{tabular}
\end{table}

\subsection{Results on Single Cell Embryoid Body Data}
\label{sec:results_ebd}
To evaluate trajectory inference accuracy on a biological data, where there is no ground truth for trajectories, we employ a leave-one-out (LOO) protocol on a real world Embryoid Body dataset \cite{moon_visualizing_2019} which comprises five time points. For each held-out time point $t$, each method is trained on the remaining time points and tasked with predicting the cell distribution at $t$ by integrating from the preceding time point $t_{-1}$. Table \ref{tab:loo_eb} reports the results for all interior time points (i.e., excluding the first and last). To ensure fair comparison, all methods are trained and evaluated in the same feature space: the GAGA 2D latent space ensuring that all methods operate on an identical representation. Predicted and ground-truth cell distributions are compared using three metrics: (1) the 1-Wasserstein distance (W1), computed via the Earth Mover's Distance with a Euclidean ground metric on subsampled populations of up to 1,000 cells; (2) the Maximum Mean Discrepancy with a Gaussian kernel (MMD-G), using median-heuristic bandwidth estimation; and (3) an $\ell_2$-norm discrepancy between sample means (MMD-M), which corresponds to the MMD under an identity-map kernel and measures distributional shift at the level of the mean embedding.

\begin{table}[t]
\caption{Leave-one-out trajectory inference accuracy on the EB dataset. Metrics are computed in raw PCA space (50D). W1: Wasserstein-1; MMD-G: Gaussian MMD; MMD-M: mean discrepancy. Best values are in bold.}
\label{tab:loo_eb}
\centering
\small
\begin{tabular}{l l c c c}
\hline
\textbf{Method} & \textbf{Held-out $t$} & \textbf{W1} $\downarrow$ & \textbf{MMD-G} $\downarrow$ & \textbf{MMD-M} $\downarrow$ \\
\hline

MIOFlow & 1 & 0.8072 & \textbf{0.1396} & \textbf{0.7700} \\
MIOFlow & 2 & \textbf{0.4689} & \textbf{0.0255} & \textbf{0.1973} \\
MIOFlow & 3 & \textbf{0.4101} & \textbf{0.0209} & \textbf{0.3646} \\
\hline

OT-CFM & 1 & \textbf{0.7924} & 0.1442 & 0.7709 \\
OT-CFM & 2 & 0.4741 & 0.0296 & 0.2034 \\
OT-CFM & 3 & 0.4612 & 0.0299 & 0.4343 \\
\hline

SF2M & 1 & 0.8694 & 0.1540 & 0.8428 \\
SF2M & 2 & 0.4864 & 0.0356 & 0.2501 \\
SF2M & 3 & 0.4667 & 0.0316 & 0.4448 \\
\hline

\end{tabular}
\end{table}

\subsection{Ablation Study}
To evaluate the individual effectiveness of the stochastic dynamics modeling and the growth/death rate model, we conducted an ablation study using three synthetic datasets. These datasets were specifically designed to isolate and simulate three fundamental biological behaviors: 
\begin{enumerate}
    \item branching dynamics, representing cell fate decisions; 
    \item population proliferation, simulating rapid growth;
    \item population attrition, simulating cell death.
\end{enumerate}
These scenarios represent common pitfalls for standard flow-based models, which typically assume a constant population density and deterministic paths.

We compared a baseline version of MIOFlow 2.0 consisting only of the manifold-constrained optimal transport against versions incrementally incorporating our biological priors. As shown in Figure~\ref{fig:ablation}, the baseline model is capable of interpolating between snapshots but introduces spurious trajectories between branches (left: there are trajectories between branches that do not overlap with the cells), across populations (middle: there are ), and fails to match the growing population (right).
In contrast, adding the growth rate model does not induce the trajectories that
Furthermode, the 

Furthermore, the deterministic baseline struggles to capture the complex geometry of the branching dataset. In contrast, the integration of Neural Stochastic Differential Equations allows the model to capture the probabilistic nature of the branching event, producing trajectories that more faithfully follow the diverging manifold structure. These results demonstrate that while manifold-constrained OT provides a strong geometric foundation, the explicit modeling of stochasticity and proliferation is essential for capturing the nuances of realistic biological processes.
\begin{figure}[h]
  \centering
  \includegraphics[width=0.5\textwidth]{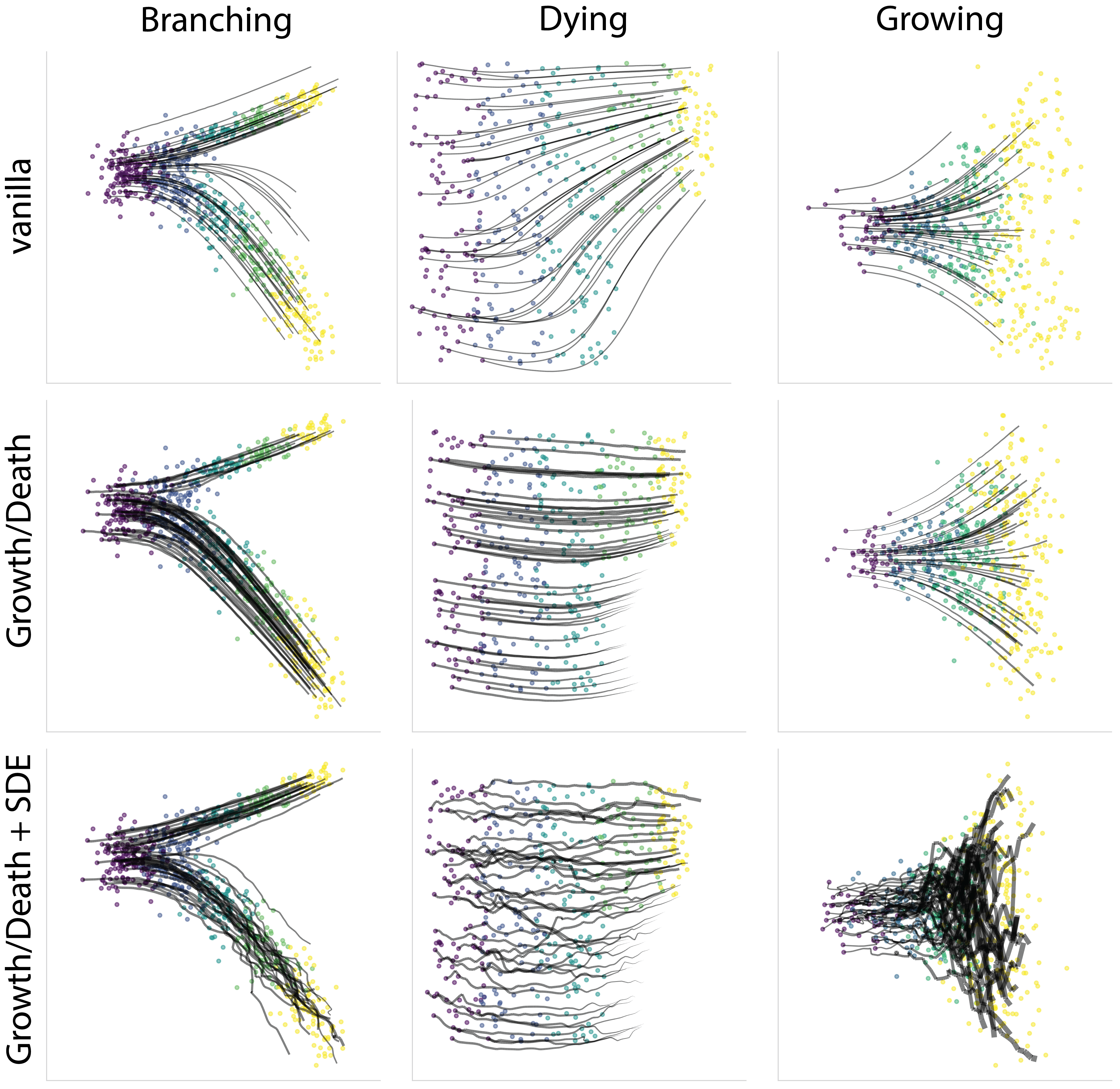}
  \caption{We evaluate MIOFlow 2.0 across three simulated datasets designed to mimic key biological characteristics: branching, population decline (dying), and proliferation (growing). By comparing the base MIOFlow 2.0 framework with variants incorporating the growth-rate model and Neural SDEs, we visualize the resulting trajectories. The results demonstrate that these integrated biological priors more accurately capture the geometry of the data manifold and the underlying dynamics compared to the base interpolation.}
  \label{fig:ablation}
\end{figure}

\subsection{Biological Case Study: Spatial-transcriptomics on axolotl data}

In order to validate the performance of the spatial-variant of MIOFlow 2.0 on real data, we applied it to a spatial transcriptomics dataset of axolotl brain samples. The dataset records spatial gene expression over seven timepoints of axolotl brain regeneration \cite{wei_axolotl}. The authors describe a regenerative trajectory traversing four sequential cell types - reaEGCs, rIPCs, IMNs, and nptxEXs, so we sought to interrogate spatial features involved in this particular developmental pathway.

Spatial features for cell type neighborhood, ligand-receptor expression, and local expression niche wwre computed as described above. The resulting concatenated spatial features were dimensionality reduced by PCA, and the dataset was subset to include only the cell types of interest. The resulting spatial feature vectors where used as input in the PHATE-regularized autoencoder to generate a spatial-feature embedding. This embedding was scaled by a factor of $s=0.2$ relative to the gene embedding, which together give a 4-dimensional embedding of the data, shown in Figure~\ref{fig:axolotl} A-C. Details pertaining to preprocessing and feature extraction are provided in Appendix~\ref{app:axolotl}.

The resulting trajectories are shown in Figure~\ref{fig:axolotl} D. These trajectories exhibit continuity in both the gene and spatial-feature embeddings, capturing smooth transitions between cell types occupying similar spatial niches. We are able to decode these trajectories to recover gene trends over time, shown in Figure~\ref{fig:axolotl} G, as well as dynamics in spatial features. Among these spatial trends, we observe an increase in NCAN:SDC3 signalling, (Figure~\ref{fig:axolotl} E-F), a lingand-receptor pair whose signalling activity has been implicated in promoting neurite outgrowth and regulating cell adhesion \cite{NCANSDC3}. The distribution of this spatial feature within the joint embedding space is shown in Figure~\ref{fig:axolotl} C. Only a certain subpopulation of terminal cell states experience high NCAN:SDC3 signalling, and MIOFlow 2.0 is able to distinguish which initial cell states putatively develop into this subpopulation Figure~\ref{fig:axolotl} D. Altogether, the capacity of MIOFlow 2.0 to incorporate and recover trends in spatial features enables a much richer analysis of spatail scRNA-seq data.

\begin{figure}
    \centering
    \includegraphics[width=0.9\linewidth]{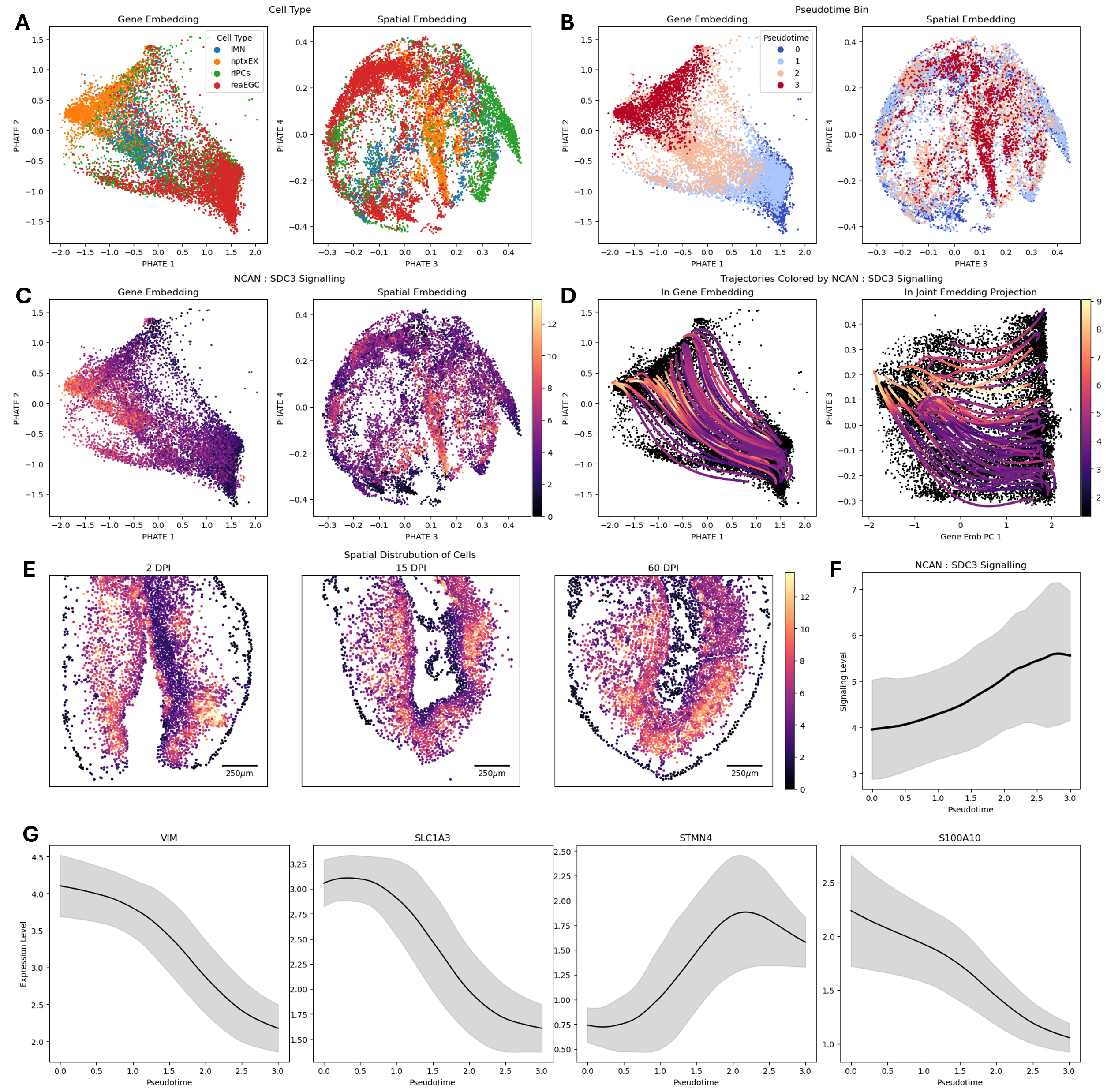}
    \caption{Gene and spatial feature embeddings colored by (A) cell type annotation, (B) pseudotime point. (C) NCAN:SDC3 ligand-receptor signalling spatial feature. (D) Trajectories traced on top of gene embedding (left) and joint projection of gene and spatial embeddings (right) colored by decoded NCAN:SDC3 signalling feature. (E) Spatial organization of cells in Stereo-seq dataset colored by NCAN:SDC3 signalling feature. Note that not all cells shown are included in the trajectories, but all are used for computing spatial features. (F) Decoded NCAN:SDC3 signalling feature over time, averaged across trajectories with +/- one standard deviation. (G) Decoded gene trajectories for a selection of highly variable genes.}
    \label{fig:axolotl}
\end{figure}

\section{Discussion}
\label{sec:discussion}

In this study, we developed MIOFlow 2.0, a comprehensive framework motivated by the the need for inferring biologically plausibile cellular trajectories from static snapshot single cell data.
Unlike existing methods that rely on simple Euclidean interpolations assume constant population densities \cite{lipman2023flow,tong2023improving}, MIOFlow 2.0 unifies manifold learning, dynamic optimal transport, and neural differential equations to model the complex realities of living systems.
The strength of our approach lies in the simultaneous integration of three essential biological priors: stochasticity, non-uniform proliferation, and spatial micro-environmental conditioning.

The first pillar of our framework, the integration of Neural Stochastic Differential Equations (Neural SDEs), allows the model to capture the inherent noise of gene expression and the probabilistic nature of cell fate decisions, which non-stochastic methods like flow matching do not allow.
As shown in our synthetic trifurcation experiments, the diffusion term in our SDE formulation enables the recovery of divergent lineages that deterministic ODE-based flows cannot naturally represent. By learning the diffusion coefficient alongside the drift, MIOFlow 2.0 identifies regions of the manifold where fate commitment is unresolved, providing a more accurate representation of differentiation hierarchies.

The second pillar addresses the non-conservative nature of biological populations through a learned growth and death rate model. With this technique, we solve a critical bottleneck in trajectory inference: the incorrect "mass-shifting" that occurs when traditional OT forces unnatural correspondence between differing celltypes. This allows the framework to faithfully model biological scenarios such as cancer treatment response or rapid embryonic expansion, where the expansion and attrition of specific sub-populations are central to the underlying dynamics. Indeed, we have recently applied a simplified version of this framework to capture tumorsphere development~\citep{cflows}, where the model successfully reconstructed trajectories leading to either tumorsphere formation or apoptosis, delineating a novel CD44$^{hi}$EPCAM$^{+}$CAV1$^{+}$ cancer stem cell profile.

The third pillar is the spatial integration strategy, which enables trajectory inference to leverage the rich contextual information provided by spatial transcriptomics. By embedding the microenvironmental cues directly into the latent state space, we allow the model to learn dynamics that are a function of the total cellular state. Our results on axolotl brain regeneration demonstrate that this joint modeling captures dependencies that gene expression alone cannot.
This bridges the gap between single-cell profiling and tissue-scale organization, allowing us to identify signaling niches that drive development and disease.

From a computational perspective, MIOFlow 2.0 demonstrates that expressive dynamics learning via NSDEs outperforms popular generative models like flow matching in matching biological trajectories. By operating in a latent space of a PHATE-regularized autoencoder, we ensure that the transport energy is minimized along the data's intrinsic manifold geometry. The stabilization provided by our momentum-based refinement further ensures that trajectories remain consistent even when temporal data is sparse.  In conclusion, MIOFlow 2.0 provides a robust, biologically-aware foundation for understanding how cells navigate complex developmental, regenerative, and pathological landscapes.Future work will focus on scaling these solvers to handle larger multi-modal datasets and incorporating higher-order graph topologies to further refine the spatial context.

Finally, we distinguish our computational approach from experimental lineage tracing~\citep{mckenna2016whole,spanjaard2018simultaneous,alemany2018whole}. While lineage tracing technologies are developing, they remain relatively crude for mapping dynamics: most protocols accumulate only a small number of noisy barcodes, yielding discrete snapshots of clonal ancestry rather than continuous trajectories. These methods can identify clonal endpoints, but they cannot resolve the continuous, state-specific transitions that drive these outcomes. MIOFlow 2.0 fills this gap by inferring the dense, continuous vector fields that sparse lineage data cannot capture.


\bibliographystyle{ACM-Reference-Format}
\bibliography{ref}

\appendix
\newpage
\section{Proof of Theorem 1}
\label{app:proof}

\begin{customthm}{1}
Consider a time-varying vector field $f(z,t)$ defining latent cellular trajectories $dZ_{u,t} = f(Z_{u,t},t)dt$ with instantaneous density $\rho_t$, and a dissimilarity metric $D(\mu,\nu)$ such that $D(\mu,\nu)=0$ iff $\mu=\nu$.
Given these assumptions, there exists a sufficiently large regularization parameter $\lambda > 0$ such that the optimal transport problem satisfies:
$$W_2(\mu,\nu)^2 = \inf_{Z_{u,t}} \mathbb{E}\bigg[ \int_0^1 \|f(Z_{u,t},t)\|_2^2 dt \bigg] + \lambda D(\rho_1,\nu), \quad \text{s.t. } Z_{u,0} \sim \mu.$$
Moreover, because the process $Z_{u,t}$ is defined on the embedded manifold space $\mathcal{Z}$ learned by our geometry-aware autoencoder, the Euclidean Wasserstein distance in latent space approximates the geodesic Wasserstein distance on the ambient manifold: $W_2(\mu,\nu) \simeq W_{d_{\mathcal{M}}}(\mu,\nu)$.
\end{customthm}

\begin{proof}
We recall that the exact dynamic formulation
$$W_2(\mu,\nu)^2 = \inf_{Z_{u,t}} \mathbb{E}\bigg[ \int_0^1 \|f(Z_{u,t},t)\|_2^2 dt \bigg] \, \text{ s.t. }dZ_{u,t} = f(Z_{u,t},t)dt,\, Z_{u,0}\sim\mu,\,Z_{u,1}\sim\nu,$$
is equivalent to the Eulerian representation
$$W_2(\mu, \nu)^2 = \inf_{(\rho_t,v)}\int_{0}^{1}\int_{\real^d} \|v(z,t)\|^2 \rho_t(dz) dt,$$
with the three constraints a) $\partial_t \rho_t + \nabla \cdot (\rho_t v) = 0$, b) $\rho_0 = \mu$, and c) $\rho_1=\nu$.
\citet{tong_trajectorynet_2020} showed that for a large $\lambda>0$ and $\rho_t$ satisfying constraint a, this minimization problem is equivalent to the relaxed form
$$W_2(\mu, \nu)^2 = \inf_{(\rho_t,v)}\int_{0}^{1}\int_{\real^d} \|v(z,t)\|^2 \rho_t(dz) dt + \lambda KL(\rho_1\,||\nu).$$
We note that their proof is valid for any dissimilarity metric $D(\rho_1,\nu)$ respecting the identity of indiscernibles.
Using the path formulation, by writing the integral as an expectation and taking the infimum over all absolutely continuous paths, we have
$$W_2(\mu,\nu)^2 =\inf_{Z_{u,t}} \mathbb{E}\bigg[ \int_0^1 \|f(Z_{u,t},t)\|_2^2 dt \bigg] + \lambda D(\rho_1,\nu) \, \text{ s.t. }\,dZ_{u,t} = f(Z_{u,t},t)dt,\, Z_{u,0}\sim\mu.$$
Assuming the geometry-aware encoder $E_\phi$ achieves a perfect mapping, the Euclidean distance in the latent space approximates the geodesic distance on the ambient manifold.
This means $\|E_\phi(x_i)-E_\phi(x_j)\|_2 \simeq d_\M(x_i,x_j)$ for all $x_i,x_j\in\X\subseteq\M$.
Therefore, there exist constants $c,C>0$ such that $c \,d_\M(x_i,x_j) \leq \|E_\phi(x_i)-E_\phi(x_j)\|_2 \leq C d_\M(x_i,x_j)$ for all $x_i,x_j\in \X$.
Then, for all joint distributions $\pi\in\Pi(\mu_{ambient},\nu_{ambient})$, we have
$$c^2\int_{\X\times\X} d_\M(x,y)^2 \pi(dx,dy)\leq \int_{\X\times\X} \|E_\phi(x)-E_\phi(y)\|_2^2\pi(dx,dy)\leq C^2\int_{\X\times\X}d_\M(x,y)^2\pi(dx,dy).$$
Taking the infimum with respect to $\pi$ yields the desired result $W_2(\mu,\nu)\simeq W_{d_\mathcal{M}}(\mu_{ambient},\nu_{ambient})$.
\end{proof}

\section{Full MIOFlow 2.0 Algorithm}
\label{app:full_mioflow}
\begin{algorithm}[H]
\caption{MIOFlow (Global training)}
\label{alg:mioflow_global}
\begin{algorithmic}[1]
\State Input:
\begin{itemize}
    \item Time-resolved cell-by-gene expression matrices $X_t = (x_{cgt})_{c\in\mathcal{C}_t,\,g\in\mathcal{G}},\; t=1,\dots,T$
    \item (Optional) spatial features $S_t = (s_{cit})_{c\in\mathcal{C}_t,\,i=1,\dots,p},\; t=1,\dots,T$
    \item Solver: $\texttt{solver} \in \{\text{ODE}, \text{SDE}\}$
    \item Momentum parameter: $\beta$
    \item Loss weights $\lambda_m,\lambda_e,\lambda_d$
\end{itemize}
\State Output: Trained neural network weights $\theta,\phi,\psi$
\For{$ t = 1 $ to $ T $}
  \If{spatial features provided}
    \State $ X_{joint,t} \gets \texttt{Concat}(\texttt{Normalize}(X_t), \texttt{Normalize}(S_t)) $
  \Else
    \State $ X_{joint,t} \gets X_t $
  \EndIf
  \State $ Z_t \gets \texttt{GAGA\_Encoder}(X_{joint,t}) $ \Comment{Embed using PHATE-regularized autoencoder}
\EndFor
\For{$ i = 1 $ to $\text{max\_iter}$}
  \For{$ t = 1 $ to $ T $}
    \State $ z_t \gets \texttt{Sample}(Z_t, \text{batch\_size}) $
  \EndFor
  \If{$\texttt{solver} = \text{ODE}$}
    \State Set $ g_{\phi}(\cdot) \gets 0 $ \Comment{ODE: diffusion term not used}
  \EndIf
  \State \Comment{Global training: Integrate from the first time point to the rest.}
  \State $ \hat{z}_{2} \gets \texttt{DiffEqSolve}\Big(\tilde f_\theta(\cdot,\beta), \tilde g_\phi(\cdot), z_1, \Delta t=1\Big) $
  \State $ \hat{m}_{2} \gets \texttt{Normalize}\Big(h_{\psi}(\hat{z}_{2},2)\Big) $
  \State $ m_{2} \gets \texttt{Normalize}((1,\dots,1)^T) $
  \For{$ t = 2 $ to $ T-1 $}
    \State $ \hat{z}_{t+1} \gets \texttt{DiffEqSolve}\Big(\tilde f_\theta(\cdot,\beta), \tilde g_{\phi}(\cdot), \hat{z}_{t}, \Delta t=1\Big) $
    \State $ \hat{m}_{t+1} \gets \texttt{Normalize}\Big(h_{\psi}(\hat{z}_{t+1},t+1)\Big) $
    \State $ m_{t+1} \gets \texttt{Normalize}((1,\dots,1)^T) $
  \EndFor
  \State $ L \gets \sum_{t=1}^{T-1} \Big[ \lambda_m L_m\big(\hat{z}_{t+1}, z_{t+1}, \hat{m}_{t+1}, m_{t+1}\big) + \lambda_e L_e\big(\hat{z}_{t+1}, z_{t+1}\big) + \lambda_d L_d\big(\hat{z}_{t+1}, z_{t+1}\big) \Big] $
  \State $ (\theta,\phi,\psi) \gets \texttt{GradientDescent}(L, \theta,\phi,\psi) $
\EndFor
\State \Return $ \theta,\phi,\psi $
\end{algorithmic}
\end{algorithm}

\begin{algorithm}[H]
\caption{MIOFlow (Local training)}
\label{alg:mioflow_local}
\begin{algorithmic}[1]
\State Input:
\begin{itemize}
    \item Time-resolved cell-by-gene expression matrices $X_t = (x_{cgt})_{c\in\mathcal{C}_t,\,g\in\mathcal{G}},\; t=1,\dots,T$
    \item (Optional) spatial features $S_t = (s_{cit})_{c\in\mathcal{C}_t,\,i=1,\dots,p},\; t=1,\dots,T$
    \item Solver: $\texttt{solver} \in \{\text{ODE}, \text{SDE}\}$
    \item Momentum parameter: $\beta$
    \item Loss weights $\lambda_m,\lambda_e,\lambda_d$
\end{itemize}
\State Output: Trained neural network weights $\theta,\phi,\psi$
\For{$ t = 1 $ to $ T $}
  \If{spatial features provided}
    \State $ X_{joint,t} \gets \texttt{Concat}(\texttt{Normalize}(X_t), \texttt{Normalize}(S_t)) $
  \Else
    \State $ X_{joint,t} \gets X_t $
  \EndIf
  \State $ Z_t \gets \texttt{GAGA\_Encoder}(X_{joint,t}) $ \Comment{Embed using PHATE-regularized autoencoder}
\EndFor
\For{$ i = 1 $ to $\text{max\_iter}$}
  \For{$ t = 1 $ to $ T $}
    \State $ z_t \gets \texttt{Sample}(Z_t, \text{batch\_size}) $
  \EndFor
  \If{$\texttt{solver} = \text{ODE}$}
    \State Set $ g_{\phi}(\cdot) \gets 0 $ \Comment{ODE: diffusion term not used}
  \EndIf
  \State \Comment{Local training: Integrate from each time point to the next.}
  \For{$ t = 1 $ to $ T-1 $}
    \State $ \hat{z}_{t+1} \gets \texttt{DiffEqSolve}\Big(\tilde f_\theta(\cdot,\beta), \tilde g_{\phi}(\cdot), z_t, \Delta t=1\Big) $
    \State $ \hat{m}_{t+1} \gets \texttt{Normalize}\Big(h_{\psi}(\hat{z}_{t+1},t+1)\Big) $
    \State $ m_{t+1} \gets \texttt{Normalize}((1,\dots,1)^T) $
  \EndFor
  \State $ L \gets \sum_{t=1}^{T-1} \Big[ \lambda_m L_m\big(\hat{z}_{t+1}, z_{t+1}, \hat{m}_{t+1}, m_{t+1}\big) + \lambda_e L_e\big(\hat{z}_{t+1}, z_{t+1}\big) + \lambda_d L_d\big(\hat{z}_{t+1}, z_{t+1}\big) \Big] $
  \State $ (\theta,\phi,\psi) \gets \texttt{GradientDescent}(L, \theta,\phi,\psi) $
\EndFor
\State \Return $ \theta,\phi,\psi $
\end{algorithmic}
\end{algorithm}

\section{Axolotl Data Processing}
\label{app:axolotl}
For preprocessing, we applied Harmony batch integration to remove some strong batch effects between timepoints \cite{Harmony}. To compute spatial features, we used a 3-hop 5-nearest-neighbor graph to define each cell's neighborhood, excluding any nearest-neighbors with a distance greater than 250$\mu m$. The cell type neighborhood feature quantified the local frequency of 28 annotated cell types. For the ligand-receptor features, we found 640 pairs of ligand-receptor interactors documented in the human CellChat database which had orthologues expressed in the axolotl data \cite{jin_cellchat}. Prior to computing the ligand-receptor features, we imputed the expression of all relevant genes using MAGIC \cite{magic}. The local expression niche was computed as the local average of the first 200 principal components of the gene expression matrix. The concatenated 828-dimensional spatial feature vector was then dimensionality reduced to 100-d by PCA. These features, subset to the four cell types of interest, were dimensionality reduced by PHATE and used as input to the PHATE regularized autoencoder. On the other hand, the gene embedding was obtained by computing a PHATE embedding of the full dataset's gene expression matrix, which was then subset to the cell types of interest and used as input to a separate PHATE regularized autoencoder, along with the 200-d PCA projection of the gene expression matrix. To obtain the final joint embedding, the gene and spatail embeddings were scaled to have standard deviations of 1.0 and 0.2 respectively and concatenated. To define the time bins for MIOFlow 2.0, we used scanpy's implementation of diffusion pseudotime, averageing pseudotime values computed from multiple root cells of the final cell type, inverting, and binning into four levels \cite{scanpy}.

\subsection{Synthetic single-cell data generation with SERGIO}\label{app:sergio}

We use SERGIO \cite{SERGIO}, a stochastic gene regulatory network (GRN) simulator, to generate synthetic single-cell gene expression data with known ground-truth regulatory structure, differentiation trajectories, and temporal dynamics.

SERGIO begins by constructing a directed gene regulatory network consisting of $G$ genes, where edges represent regulatory interactions. A subset of genes are designated as master regulators, which have no incoming edges and act as external drivers of cellular programs. The remaining genes are regulated by one or more upstream transcription factors according to the specified GRN topology. Each regulatory edge $(j \rightarrow i)$ is associated with a signed interaction strength $w_{ij}$, indicating activation or repression. Cell type specific expression programs are defined by prescribing target expression levels for the master regulators, which serve as boundary conditions for different cellular states or fates.

Gene expression dynamics are simulated by integrating a system of stochastic differential equations forward in time. For each gene $i$, the expression level $x_i(t)$ evolves according to
$$\frac{d x_i(t)}{dt} = b_i + \sum_{j \in \mathrm{reg}(i)} w_{ij} \frac{x_j(t)^{n_{ij}}}{K_{ij}^{n_{ij}} + x_j(t)^{n_{ij}}} - \lambda_i x_i(t) + \eta_i(t)$$
where $b_i$ is the basal transcription rate, $\mathrm{reg}(i)$ denotes the set of regulators of gene $i$, $K_{ij}$ and $n_{ij}$ are the Hill constant and cooperativity coefficient, respectively, $\lambda_i$ is a gene-specific degradation rate, and $\eta_i(t)$ is a stochastic noise term modeling intrinsic transcriptional variability. The noise is typically modeled as Gaussian with variance proportional to expression level, generating realistic cell-to-cell heterogeneity.

To simulate differentiation and lineage branching, SERGIO defines a directed graph over discrete cell states and integrates the stochastic dynamics along each transition. Continuous gene expression trajectories are generated by gradually interpolating master regulator expression programs between connected states. Single cells are then obtained by sampling expression profiles along these trajectories, producing populations that capture progenitor states, intermediate transitions, and terminal fates.

To emulate experimental single-cell RNA sequencing data, SERGIO applies a technical noise model to the simulated expression values, including library size effects, dropout events, and sampling noise, resulting in sparse count matrices. Optionally, SERGIO further decomposes expression levels into spliced and unspliced mRNA counts by simulating transcription, splicing, and degradation processes, enabling downstream RNA velocity analyses. We generated two synthetic datasets representing trajectory topologies commonly observed in single-cell studies: a trifurcating differentiation process and a curved, S-shaped trajectory.

\subsubsection{Trifurcation trajectories}

The trifurcation dataset simulates a differentiation process in which a single progenitor cell state gives rise to three distinct terminal cell fates. We simulated 100 genes across 500 cells, organized according to a differentiation graph with one initial cell type and three downstream cell types.  This was achieved by defining a differentiation graph in which one initial cell type has outgoing transitions to three downstream cell types, each associated with a distinct master regulator expression program. Cells were sampled along all three trajectories, producing a continuous branching structure with a shared progenitor region and three diverging lineages.

\subsubsection{S-shaped trajectory}

The S-shaped dataset models a more complex trajectory involving cell-cycle dynamics coupled with fate specification. We simulated a cell-cycle driven differentiation process consisting of 1000 genes and 990 cells. In this setting, cells first undergo a cyclic progression corresponding to cell-cycle dynamics. From a specific point along the cycle, the trajectory bifurcates into two terminal fates. The final S-shaped dataset was obtained by subsampling 315 cells along the combined cycle and bifurcation trajectories, resulting in a curved differentiation path.

\section{Distinction from Spatially Informed Trajectory Methods}
\label{app:comparison}

Several trajectory-inference methods have recently been proposed that incorporate spatial coordinates from spatial transcriptomics data, including SpaTrack \cite{SHEN2025101194} and NicheFlow \cite{sakalyan2025modeling}. While these methods share with MIOFlow 2.0 the goal of leveraging spatial information, they differ fundamentally in how these features are represented and incorporated into the trajectories.

One crucial advantage of MIOFlow 2.0 is its use of biologically relevant spatial features as opposed to explicit spatial coordinates. Each method above attempts to learn trajectories between timepoints informed by both gene expression and the cell/neighborhood's spatial position. The underlying assumption of these methods is that each timepoint shares an implicit underlying coordinate system which would allow the timepoints to be aligned spatially, such that cells in the dorsal and anterior region of the sample, for example, remain there. This assumption is not valid for most biological applications - cells in culture or many tissue samples lack any kind of directionality which would define a meaningful alignment. Only MIOFlow 2.0, which allows comparisons of spatial features between timepoints independent of any shared coordinate system, is able to leverage the spatial information of these datasets.

SpaTrack learns optimal transport maps between consecutively observed timepoints, incorporating both gene expression and spatial distance into the transport cost. However, SpaTrack's interpolation is obtained by composing discrete transport maps under a Markov assumption, whereas MIOFlow 2.0 learns a continuous flow via a neural ODE, producing trajectories shaped by the learned dynamics rather than by composition of pairwise couplings. Furthermore, SpaTrack relies on explicit spatial coordinates, requiring a shared coordinate system across timepoints, while MIOFlow 2.0 operates on derived spatial features that are coordinate-system independent.

NicheFlow and MIOFlow 2.0 are the only two methods that make use of the gene expression of the cells' local neighborhoods, a biologically relevant property informing a cell's fate. NicheFlow, however, differs from MIOFlow 2.0 in that it operates at the level of cellular microenvironments rather than individual cells. It models the evolution of local neighborhoods as point clouds, jointly predicting changes in spatial coordinates and gene expression at the niche level. While this captures coordinated tissue-level dynamics, it does not produce individual cell trajectories. Moreover, NicheFlow does not learn a direct flow between observed timepoints. Rather, it learns a conditional generative model: given a microenvironment at time $t$, it generates a corresponding neighborhood at time $t+1$ by learning a flow from Gaussian noise conditioned on the source neighborhood. This means that the learned dynamics do not describe a continuous transformation of cells through time, but rather a conditional sampling procedure that produces plausible future neighborhoods. In contrast, MIOFlow 2.0 learns a continuous flow in which cells traverse an interpretable trajectory from their initial to their final state, enabling meaningful interpolation and the recovery of intermediate dynamics.




\end{document}